\documentclass[pdflatex,sn-mathphys-ay]{sn-jnl}% Math and Physical Sciences Author Year Reference Style
\usepackage{graphicx}%
\usepackage{multirow}%
\usepackage{amsmath,amssymb,amsfonts}%
\usepackage{amsthm}%
\usepackage{mathrsfs}%
\usepackage[title]{appendix}%
\usepackage{xcolor}%
\usepackage{textcomp}%
\usepackage{manyfoot}%
\usepackage{booktabs}%
\usepackage{url}
\usepackage{algorithm}%
\usepackage{algorithmicx}%
\usepackage{algpseudocode}%
\usepackage{hyperref}
\usepackage{url}
\usepackage{enumitem}
\usepackage{color}
\usepackage{mathtools}
\usepackage{cleveref}
\usepackage{rotating} % in preamble
\usepackage{subcaption}
\usepackage{colortbl}
\usepackage{tikz}
\let\orcidlogo\relax
\usepackage{orcidlink}
\theoremstyle{thmstyleone}%
\newtheorem{theorem}{Theorem}
\newtheorem{lemma}{Lemma}%  meant for continuous numbers
\theoremstyle{thmstyletwo}%

\theoremstyle{thmstylethree}%

\DeclareMathOperator*{\argmin}{arg\,min}
\DeclareMathOperator*{\argmax}{arg\,max}

\newcommand\defin{\stackrel{\mathclap{\scriptsize\mbox{def}}}{=}}
\newcommand\fit{\stackrel{\mathclap{\scriptsize\mbox{fit}}}{\leftarrow}}

\makeatletter
\newcommand{\markthis}[3]{% #1 = marker, #2 = label, #3 = relation
  \overset{% the marker and the label
    \textup{\makebox[0pt]{#1}}%
    \def\@currentlabel{#1}%
    \ltx@label{#2}%
  }{% the relation
    #3%
  }%
}

\begin{document}

\title[Ensemble Distributionally Robust Bayesian Optimisation with Continuous Context]{Ensemble Distributionally Robust Bayesian Optimisation with Continuous Context}

%%=============================================================%%
%% GivenName	-> \fnm{Joergen W.}
%% Particle	-> \spfx{van der} -> surname prefix
%% FamilyName	-> \sur{Ploeg}
%% Suffix	-> \sfx{IV}
%% \author*[1,2]{\fnm{Joergen W.} \spfx{van der} \sur{Ploeg} 
%%  \sfx{IV}}\email{iauthor@gmail.com}
%%=============================================================%%

\author*[1]{\fnm{Tigran} \sur{Ramazyan} \orcidlink{0000-0002-8770-0637}}\email{tramazyan@hse.ru}

\author[1]{\fnm{Denis} \sur{Derkach} \orcidlink{0000-0001-5871-0628}}\email{dderkach@hse.ru}

\affil[1]{\orgdiv{Faculty of Computer Science}, \orgname{HSE University}, \orgaddress{\street{Pokrovsky Boulevard 11}, \city{Moscow}, \postcode{109028}, \state{Moscow}, \country{Russia}}}

%%==================================%%
%% Sample for unstructured abstract %%
%%==================================%%

\abstract{We study Bayesian Optimisation (BO) in settings where the objective function is influenced by uncontrollable environmental contexts governed by an unknown probability distribution. In practice, the contextual distribution must be estimated from empirical data, a process that inherently introduces distributional mismatch, producing sub-optimal results. While Distributionally Robust Optimisation (DRO) provides a framework to mitigate these risks, existing robust BO methods frequently suffer from high computational complexity, rely on discretisation of continuous context spaces, or impose restrictive assumptions on the structure of the ambiguity set. To overcome these limitations, we propose Ensemble Distributionally Robust Bayesian Optimisation (EDRBO). Our framework leverages the expressive power of ensemble surrogate models to approximate the black-box function while simultaneously accounting for contextual uncertainty. By utilising Wasserstein ball as ambiguity sets, EDRBO provides a robustified acquisition function that remains computationally tractable and natively handles continuous context spaces. We establish a rigorous theoretical foundation for our approach by proving sublinear cumulative regret guarantees of order $\mathcal{O}(\gamma_T \sqrt{T})$, where $\gamma_T$ represents the maximum information gain within the ensemble. Finally, we provide extensive empirical evaluations that corroborate our theory and demonstrate the state-of-the-art performance of EDRBO.}

\keywords{Distributionally robust Bayesian optimisation, Continuous context, Ensemble methods, Wasserstein distance}

%%\pacs[JEL Classification]{D8, H51}

%%\pacs[MSC Classification]{35A01, 65L10, 65L12, 65L20, 65L70}

\maketitle

\section{Introduction}

Design optimisation is fundamental to engineering and scientific discovery, aiming to identify system configurations that maximise operational efficiency or minimise costs \citep{DORIGO2023100085}. The underlying objective function is usually unknown, expensive to evaluate, and analytically intractable. These systems are treated as black boxes - complex simulators or real-world experiments that return only point-wise values for queried inputs, offering no direct gradient information to guide the optimisation.

While black-box optimisation spans both gradient-based and gradient-free paradigms, the former is largely restricted to differentiable objectives \citep{DBLP:journals/corr/ChangUTT16, DBLP:journals/corr/DegraveHDW16}. Many critical problems and applications, however, involve non-differentiable landscapes or stochastic simulators with intractable likelihoods. Examples include Monte-Carlo simulations in particle scattering \citep{lgso56} molecular dynamics \citep{lgso1} and many more. In these scenarios, Bayesian Optimisation (BO) \citep{bbo20} has emerged as a powerful framework for sample-efficient global optimisation.

A particularly challenging extension is \emph{contextual} BO, where the objective function depends on a controllable design variable $x \in \mathcal{X}$ and an uncontrollable environmental contextual variable $c \in \mathcal{C}$ drawn from an unknown distribution $P$. While the problem simplifies to standard stochastic optimisation if $P$ is known, real problems require $P$ to be estimated from empirical data. This estimation leads to a \emph{distributional mismatch} between the true and the nominal contextual distributions, potentially resulting in sub-optimal designs.

The \emph{Distributionally Robust Optimisation} paradigm safeguards against such errors \citep{staib2019distributionally, kuhn2024wassersteindistributionallyrobustoptimization}, recently finding synergy with BO \citep{kirschner2020distributionally, husain2023distributionallyrobustbayesianoptimization, micheli2025wassersteindistributionallyrobustbayesian}. DRO utilises statistical divergences or distance metrics to construct an \emph{ambiguity set} - a family of candidate distributions centred around the nominal one. By optimising for the worst-case scenario within this set, DRO provides a robust lower bound on performance, ensuring reliability even under distributional shifts.

\textbf{Contributions.} Our primary contributions are as follows:
\begin{itemize}[leftmargin=*]
    \item We propose \emph{Ensemble Distributionally Robust Bayesian Optimisation} (EDRBO), a novel framework that integrates ensemble learning with DRO to model both the black-box function and the associated distributional uncertainty without requiring restrictive assumptions on the ambiguity set or the context space. Our approach leverages the properties of Wasserstein DRO to derive a computationally tractable acquisition function, effectively resolving the maximin challenge inherent in DRO while preserving efficiency required for practical application.
    \item We provide a rigorous theoretical analysis, deriving a desired sublinear cumulative regret order of $\mathcal{O}(\gamma_T \sqrt{T})$, where $T$ represents the time horizon and $\gamma_T$ denotes the maximum information gain within the ensemble.
    \item We demonstrate the efficacy of EDRBO through extensive experiments, showing that it consistently matches or outperforms state-of-the-art methods while maintaining computational tractability.
\end{itemize}

The main text of this paper is organised as follows. Section~\ref{sec:prelim} establishes the necessary preliminaries. In Section~\ref{sec:method}, we detail the EDRBO framework and present our theoretical results. Finally, Section~\ref{sec:experiments} provides an empirical evaluation and discussion of our findings.

\label{notation}
\textbf{Notation.} We denote the search space by $\mathcal{X} \subseteq \mathbb{R}^{d_x}$ and the context space by $\mathcal{C} \subseteq \mathbb{R}^{d_c}$. We make a common and mild assumption that $\mathcal{X}$ is compact. Let $\Xi = \mathcal{X} \times \mathcal{C}$ and $\xi = [x^T, c^T]^T$. The black-box function $f: \Xi \rightarrow \mathbb{R}$ is assumed to be bounded from above. The ensemble surrogate model $\hat{f}$ is comprised of $M$ distinct constituent surrogates $\hat{f}_m$. Each is defined by a positive semi-definite kernel $k_m: \Xi \times \Xi \rightarrow \mathbb{R}$ with $\mathcal{H}_m$ being the unique RKHS associated with the kernel, equipped with norm $\|\cdot\|_{\mathcal{H}_m}$; $k_m$ is $L_m(x)$-Lipschitz with respect to $c$ and $L_m(x) > 0$ $\forall x \in \mathcal{X}$. We assume that $\|f\|_{\mathcal{H}_m} \leq B_m, B_m > 0$ and that $\forall \xi, \xi' \in \Xi \ \ k_{m}(\xi, \xi')\leq 1$. We use $\mathbb{P}(\mathbb{R}^{d_c})$ to denote the set of all Borel probability measures on $\mathbb{R}^{d_c}$, and $\mathbb{P}_2(\mathbb{R}^{d_c}) \subset \mathbb{P}(\mathbb{R}^{d_c})$ the subset of measures with finite second moment. We use $f(x, \cdot) \# Q$ to denote the push-forward of the context distribution $Q$ through $f$. We denote the iteration budget by $T$. At iteration $t$ we use $\mathbb{D}_{t} = \{(x_{\tau}, c_{\tau}, y_{\tau})\}_{\tau=1}^t$ for the set of ground truth values and $\epsilon_t$ to denote the zero-mean $R$-sub-Gaussian output noise. We denote expected regret at iteration $t$ by $r_t$ and the cumulative expected regret is defined as $R_T = \sum_{t=1}^T r_t$. To describe asymptotic behaviour regret bounds we use the big O notation: $\mathcal{O}(\cdot)$.

\section{Preliminaries}\label{sec:prelim}

\subsection{Bayesian Optimisation} 

Consider a black-box function $f$ to be optimised with respect to the design variable $x$. $f$ is point-wise observable and there is no direct access to neither $f$ nor its gradients. At each iteration, $t$, the learner selects a design $x_t \in \mathcal{X}$, observes context $c_t \in \mathcal{C}$ and a noisy observation of $f: y_t = f(x_t, c_t) + \epsilon_t$. The context variable $c \sim P \in \mathbb{P}_2(\mathbb{R}^{d_c})$, $P$ is chosen by the environment and is unknown to the learner. The goal is to find the global maximiser of $f$:

\begin{equation}
    x^* = \argmax_{x \in \mathcal{X}} \mathbb{E}_{c \sim P} [f(x, c)] .
\end{equation}

The $(x_t, c_t, y_t)$ triplets comprise the set of all ground truth values obtained up to iteration $t$, $\mathbb{D}_t$. At iteration $t$, $f$ is modelled by a surrogate model $\hat{f}$ fit on $\mathbb{D}_t$. Using the surrogate's approximation, BO builds an acquisition function to guide the optimisation. By optimising the acquisition function, one identifies promising regions of the search space, while accounting all that is known and unknown regarding the objective function at current iteration.

\subsection{Surrogate model}

Consider an ensemble of $M$ Gaussian processes. Following \citet{kirschner2020distributionally, micheli2025wassersteindistributionallyrobustbayesian}, each base expert solves a regularised least-squares problem in its RKHS. For regularisation parameter $\lambda > 0$, at iteration $t$, given $\mathbb{D}_t$, the posterior mean is given by

\begin{equation} \label{eq:mean}
    \mu_{m,t}(\xi) = \textbf{k}_{m,t}(\xi)^T \left( \textbf{K}_{m,t} + \lambda \textbf{I}\right)^{-1} \textbf{y}_t
\end{equation}
and covariance
\begin{equation} \label{eq:cov}
    k_{m,t}(\xi, \xi') = \frac{1}{\lambda} \left( k_m(\xi, \xi') - \textbf{k}_{m,t}(\xi)^T \left( \textbf{K}_{m,t} + \lambda \textbf{I}\right)^{-1} \textbf{k}_{m,t}(\xi') \right),
\end{equation}
where $\textbf{k}_{m,t}(\xi) = [k_m(\xi, \xi_1), \ldots, k_m(\xi, \xi_t)], \textbf{K}_{m, t}$ is the kernel Gram matrix, and $\textbf{y}_t = [y_1, \ldots, y_t]$.
To approximate the black-box function we average the expert mean:
\begin{equation}\label{eq:bbf_mean_approx}
    \hat{f}(\xi) = \mu_t(\xi) = \frac{1}{M} \sum_{m=1}^M \mu_{m, t}(\xi),
\end{equation}
for which we provide the following finite-sample confidence guarantees.
\begin{lemma}\label{lem:confidence}
    Let $\Xi \in \mathbb{R}^d$ and $f: \Xi \rightarrow \mathbb{R}$. Each expert assumes that $f \in \mathcal{H}_m$, with $\|f\|_{\mathcal{H}_m} \leq B_m$ and let $\epsilon_t$ be $R$-sub-Gaussian. Let $\delta \in (0, 1)$. With probability at least $1-\delta$, $\forall \xi \in \Xi$ and $\forall t\geq 1$:
    \begin{equation}
        |\mu_t(\xi) - f(\xi)| \leq \beta_t \sigma_t(\xi),
    \end{equation}
    with
    \begin{equation}\label{eq:std_approx}
       \sigma_t(\xi) = \frac{1}{M} \sum_{m=1}^M \sigma_{m, t} (\xi) 
    \end{equation}
    and 
    \begin{equation}\label{eq:beta_max}
        \beta_t = \max_m B_m  + R \sqrt{2\left(\gamma_{m,t} + 1 + \log \frac{M}{\delta}\right)},
    \end{equation}
    where $\gamma_{m,t}$ is maximum information gain. A full proof is provided in Appendix \ref{sec:confidence}.
\end{lemma}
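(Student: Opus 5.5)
The plan is to reduce the ensemble statement to the standard single-kernel self-normalised confidence bound of \citet{chowdhury2017kernelized} and \citet{abbasi2011improved}, as used in \citet{kirschner2020distributionally}, applied separately to each expert. We then combine the experts with a union bound and the triangle inequality.

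\textbf{Step 1 (per-expert bound).} Fix $m \in \{1,\dots,M\}$. Since $f \in \mathcal{H}_m$ with $\|f\|_{\mathcal{H}_m} \le B_m$, $k_m \le 1$, and the noise is $R$-sub-Gaussian, the kernelised self-normalised concentration inequality applies to the regularised least-squares estimator \eqref{eq:mean}. Invoking it with confidence level $\delta/M$ gives, with probability at least $1-\delta/M$, simultaneously for all $\xi\in\Xi$ and $t \ge 1$,
\begin{equation*}
    |\mu_{m,t}(\xi) - f(\xi)| \le \beta_{m,t}\,\sigma_{m,t}(\xi), \qquad \beta_{m,t} = B_m + R\sqrt{2\left(\gamma_{m,t} + 1 + \log \tfrac{M}{\delta}\right)}.
\end{equation*}
Here $\sigma_{m,t}^2(\xi) = k_{m,t}(\xi,\xi)$, with $k_{m,t}$ as in \eqref{eq:cov}. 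The $\lambda$-scaling convention must be checked carefully. The factor $1/\lambda$ in \eqref{eq:cov} is chosen so that the standard statement (usually written for $\lambda = 1$, or with $\lambda^{1/2}$ absorbed) matches exactly. The log-determinant term $\tfrac12\log\det(\mathbf I + \lambda^{-1}\mathbf K_{m,t})$ is bounded by $\gamma_{m,t}$, by the definition of maximum information gain.

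\textbf{Step 2 (union bound).} A union bound over the $M$ experts gives an event of probability at least $1-\delta$ on which all $M$ per-expert inequalities hold for every $\xi$ and $t$. This is where the $\log(M/\delta)$ term comes from.

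\textbf{Step 3 (averaging).} On this event,
\begin{equation*}
    |\mu_t(\xi) - f(\xi)| = \Bigl|\frac1M\sum_{m=1}^M \bigl(\mu_{m,t}(\xi)-f(\xi)\bigr)\Bigr| \le \frac1M\sum_{m=1}^M \beta_{m,t}\sigma_{m,t}(\xi) \le \Bigl(\max_m \beta_{m,t}\Bigr)\,\sigma_t(\xi),
\end{equation*}
using \eqref{eq:std_approx} and $\sigma_{m,t}\ge 0$. Finally, bound $\max_m \beta_{m,t} \le \max_m B_m + \max_m R\sqrt{2(\gamma_{m,t}+1+\log\frac M\delta)}$. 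This is how \eqref{eq:beta_max} should be read, with the maximum over $m$ also applying to $\gamma_{m,t}$ (or with $\gamma_t := \max_m \gamma_{m,t}$).

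The main obstacle is Step 1. The per-expert bound must be stated for exactly the normalisation used in \eqref{eq:mean}--\eqref{eq:cov}. If the scaling with $\lambda$ does not match the cited result directly, I would rederive it from the self-normalised martingale inequality, writing $\mu_{m,t}-f$ in feature space as $(\Phi^T\Phi+\lambda I)^{-1}(\Phi^T\epsilon - \lambda f)$. Applying Cauchy--Schwarz in the $(\Phi^T\Phi+\lambda I)$-norm then splits the error into the RKHS-norm term $B_m$ and the noise term controlled by $\gamma_{m,t}$.
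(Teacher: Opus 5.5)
Your proposal is correct and follows the paper's proof essentially step for step. You apply the single-expert bound of Lemma~\ref{lem:single_confidence} to each expert at level $\delta/M$, with the union bound that the paper leaves implicit. You then use the triangle inequality on the average and take $\beta_t = \max_m \beta_{m,t}$ to pass to $\sigma_t$; this is also how the paper's proof resolves the dangling index in $\gamma_{m,t}$.

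Your caution about the $1/\lambda$ normalisation of $\sigma_{m,t}$ addresses a point the paper leaves implicit. The self-normalised rederivation you sketch gives $\lambda^{1/2}B_m + R\sqrt{\log\det(\mathbf{I} + \lambda^{-1}\mathbf{K}_{m,t}) + 2\log(M/\delta)}$. This is bounded by the stated $\beta_{m,t}$ whenever $\lambda \le 1$.
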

For ensemble methods to be effective, the orthogonality, or distinctness, of its base experts is vital. Gaussian processes are fundamentally either equivalent or orthogonal, end even under equivalence, they yield varying predictions for finite data \citep{rasmussen_gp}. By the Feldman-Hájek theorem \citep{Feldman1958, Hajek1958}, distinct base experts, residing in distinct RKHSs, are orthogonal. We employ the following kernels: Square Exponential, Rational Quadratic, and Matérn kernels with $\nu=\frac{3}{2}$ and $\nu=\frac{5}{2}$. While they cover the standard suite of Lipschitz kernels, the future scalability of such ensembles pose an interesting challenge for both the theoretical and empirical aspects of the model.

\subsection{Maximum information gain}

Design points are chosen to obtain the best possible estimates of the black-box function, while attempting to learn $f$ in as few iterations as possible. The reduction in uncertainty about $f$ from observing $\textbf{y}_t$ is called information gain and is the mutual information between $f$ and observed outputs $\textbf{y}_t$, i.e., $\mathcal{I}(\textbf{y}_t; f)$. In general, finding its maximiser is NP-hard \citep{ko1995exact}. For a Gaussian predictive posterior the problem
\begin{equation}\label{eq:mig}
    \gamma_t \defin \max_{\mathbb{D}_t} \  \mathcal{I}(\textbf{y}_t; f)
\end{equation}
is equivalent to choosing the variance-maximising design \citep{Srinivas_2012}. Thus, \emph{maximum information gain} takes the following form:
\begin{equation}\label{eq:gauss_mig}
    \gamma_t = \max_{\mathbb{D}_t} \log \det (\textbf{I} + \lambda^{-1} \textbf{K}_{m,t}),
\end{equation}
where $\textbf{K}_{m,t}$ is the posterior covariance of a Gaussian process surrogate over $\mathbb{D}_t$.

\subsection{Distributionally Robust Optimisation}

The true context distribution $P$ is fundamentally unknown. The learner only observes finite samples of context and estimates a nominal distribution $Q_t$. But with finite data this estimate is subject to statistical error, which is impossible to overcome due to the optimiser's curse \citep{kuhn2024wassersteindistributionallyrobustoptimization}.

The DRO paradigm completely mitigates this by employing a set of plausible context distributions. We focus on a common choice of such ambiguity sets - Wasserstein balls. Specifically, we consider the $2$-Wasserstein distance as the DRO metric, which results in the following ambiguity set centred at $Q_t$ and radius $\varepsilon_t \geq 0$:

\begin{equation}\label{eq:ball}
\mathbb{B}_{\varepsilon_t}(Q_t) \defin \{ Q \in \mathbb{P}_2 : W_2(Q, Q_t) \leq \varepsilon_t \}.
\end{equation}

The radius $\varepsilon_t$ acts as a tolerance on the distributional deviation. Now, the learner has to find the best design against the worst-case distribution within the ambiguity set, which yields the following maximin problem:

\begin{equation}\label{eq:dro_obj}
\max_{x \in \mathcal{X}} \ \inf_{Q \in \mathbb{B}_{\varepsilon_t}(Q_t)} \ \mathbb{E}_{c \sim Q}[f(x, c)].
\end{equation}

\section{Method}\label{sec:method}

The key component of BO is the acquisition function. There are several established technique, most notably the Confidence Bound approach, which leverages the trade off between exploitation, $\mu_t$, and exploration, $\sigma_t$. Directly optimising $\mu_t$ is a good worst-case bound for $f$ \citep{ego_convergence}, which may be too greedy, has no mechanism to escape suboptimal solutions, and does not incorporate uncertainty in any form \citep{Srinivas_2012}. Motivated by Lemma \ref{lem:confidence}, consider the following learner problem:
\begin{equation}\label{eq:inital_acqf}
    x_t = \argmax_{x \in \mathcal{X}} \inf_{Q \in \mathcal{B}_{\varepsilon_t}(Q_t)} \mathbb{E}_{c \sim Q} [\mu_t(x, c) + \beta_t \sigma_t(x, c)].
\end{equation}
% We note that the following theoretical analysis and the resulting algorithm works for any distribution with finite second moment.
As the entirety of contextual information is encapsulated in the limited observed context samples, we take advantage of the chosen surrogate model. First, consider a meaningful consensus of base experts with respect to the Wasserstein geometry - the Wasserstein Barycentre, which is the solution to the problem:
\begin{equation}
\bar{f}_t(\xi) = \argmin_{\bar{f}} \frac{1}{M} \sum_{m=1}^M W_2(\hat{f}_{m,t}(\xi), \bar{f}(\xi)).
\end{equation}
In general, finding $\bar{f}(\xi)$ is NP-hard \citep{agueh2011barycenters}. However, since the posterior of base experts is $1$D Gaussian, the ensemble barycentre is analytically tractable. It remains Gaussian, $\bar{f}_t(\xi) \sim \mathcal{N}(\bar{\mu}_t(\xi), \bar{\sigma}_t^2(\xi))$, with $\bar{\mu}_t(\xi) = M^{-1} \sum_{m=1}^M \mu_{m, t}(\xi)$ and $\bar{\sigma}_t(\xi) = M^{-1} \sum_{m=1}^M \sigma_{m, t}(\xi)$. The barycentre effectively averages base experts instead of linearly mixing their densities.

The closed form of the barycentre comes in hand to define an exploration budget based on context gap. Consider the posterior radius as the largest $2$-Wasserstein distance between a single base expert and the ensemble barycentre. Leveraging the closed-form Bures-Wasserstein distance for Gaussians, the radius also breaks down intro epistemic and aleatoric uncertainty of the ensemble:
\begin{align}\label{eq:radius}
    \hat{\varepsilon}_t(\xi) &\defin \max_{m} W_2(\hat{f}_{m,t}(\xi), \bar{f}_t(\xi))=\\
    &=\max_{m} \sqrt{ \underbrace{(\mu_{m,t}(\xi) - \bar{\mu}_t(\xi))^2}_{\text{Structural Disagreement}} + \underbrace{(\sigma_{m,t}(\xi) - \bar{\sigma}_t(\xi))^2}_{\text{Confidence Dispersion}}}.
\end{align}
To simplify the DRO problem, first consider the following Wasserstein robust regulariser:
\begin{lemma}[Lemma 2 by \citet{gao2020wassersteindistributionallyrobustoptimization}]\label{lemma:lip_bound}
For an $L(x)$-Lipschitz function $g$, $\forall x \in \mathcal{X}$, $\forall Q \in \mathbb{B}_{\varepsilon_t}(Q_t)$ and $\forall t \geq 1$:
\begin{equation*}
    |\mathbb{E}_{c \sim Q}[g(x, c)] - \mathbb{E}_{c \sim Q_t}[g(x, c)]| \leq \varepsilon_t L(x).
\end{equation*}
\end{lemma}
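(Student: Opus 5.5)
The plan is a coupling argument combined with the Kantorovich--Rubinstein-type control that the $2$-Wasserstein distance gives over the $1$-Wasserstein distance. Fix $x \in \mathcal{X}$, $t \geq 1$ and $Q \in \mathbb{B}_{\varepsilon_t}(Q_t)$. Since both $Q$ and $Q_t$ belong to $\mathbb{P}_2(\mathbb{R}^{d_c})$, the infimum defining $W_2(Q,Q_t)$ is attained by some coupling $\pi^\star$ with marginals $Q$ and $Q_t$; existence follows from the standard tightness and lower-semicontinuity argument in optimal transport. If we prefer not to invoke existence, we can take a near-optimal coupling $\pi_\eta$ with cost at most $\varepsilon_t^2 + \eta$ and let $\eta \to 0$ at the end.

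Using the coupling, rewrite the difference of expectations as a single integral:
\begin{equation*}
\mathbb{E}_{c \sim Q}[g(x,c)] - \mathbb{E}_{c' \sim Q_t}[g(x,c')] = \int \bigl(g(x,c) - g(x,c')\bigr)\, d\pi^\star(c,c').
\end{equation*}
This step requires $g(x,\cdot)$ to be integrable under both measures. Lipschitzness gives $|g(x,c)| \leq |g(x,c_0)| + L(x)\|c - c_0\|$, and finite second moments imply finite first moments, so integrability holds. Next, take absolute values inside the integral and apply the Lipschitz property in $c$. This bounds the absolute difference by $L(x)\int \|c - c'\|\, d\pi^\star$.

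The last step applies Jensen's inequality (or Cauchy--Schwarz against the constant function $1$):
\begin{equation*}
\int \|c-c'\|\, d\pi^\star \leq \Bigl(\int \|c-c'\|^2\, d\pi^\star\Bigr)^{1/2} = W_2(Q,Q_t) \leq \varepsilon_t .
\end{equation*}
Combining the inequalities gives the claimed bound $\varepsilon_t L(x)$. The bound is uniform in $Q$ within the ball, and $x$ and $t$ were arbitrary. Since $W_1 \leq W_2$, one could equally quote Kantorovich--Rubinstein duality for $W_1$ directly, after rescaling $g(x,\cdot)$ by $L(x)$ to make it $1$-Lipschitz.

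I do not expect a real obstacle. The only delicate points are measure-theoretic: existence of an optimal coupling, or the $\eta$-approximation argument, and integrability of $g(x,\cdot)$. Both are handled by the finite-second-moment assumption built into $\mathbb{P}_2$. One caveat is that the statement speaks of an "$L(x)$-Lipschitz function $g$". I will read this as Lipschitz in the context variable $c$ with constant $L(x)$, matching the convention in the Notation paragraph for the kernels $k_m$, since only Lipschitzness in $c$ is used.
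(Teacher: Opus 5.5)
Your proposal is correct. The paper gives no proof of its own and simply imports the statement as Lemma~2 of Gao et al.; your argument (an optimal or near-optimal $W_2$ coupling, the Lipschitz bound in $c$, then Jensen's inequality giving $\int\|c-c'\|\,d\pi^\star \le W_2(Q,Q_t) \le \varepsilon_t$) is the standard proof of that cited result, and reading ``$L(x)$-Lipschitz'' as Lipschitz in the context variable is the right interpretation.
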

For all $x \in \mathcal{X}$ and $\forall t \geq 1$, since $k_m$ is $L_m(x)$-Lipschitz, by Lemma 3 of \citet{micheli2025wassersteindistributionallyrobustbayesian}, $\mu_{m,t}$ and $\beta_{m,t}\sigma_{m, t}$ are $\lambda^{-\frac{1}{2}}\beta_{m, t}L_m(x)$ - Lipschitz. Then $\mu_t$ and $\beta_t \sigma_t$ are $\lambda^{-\frac{1}{2}}\beta_{t} \max_m L_m(x)$-Lipschitz, and hence $\mu_{t} + \beta_t \sigma_t$ is $L(x)$ - Lipschitz with $L(x) = 2 \lambda^{-\frac{1}{2}}\beta_{t} \max_m L_m(x)$. Since $\beta_t \geq 1$, $\hat{\varepsilon}_t$ is $L(x)/\sqrt{2}$-Lipschitz.
For the posterior radius to be fit to serve the learner, we show the following result. 
\begin{theorem}\label{thm:posterior_holder}
    Let $\mu_t +\beta_t \sigma_t$ be $L(x)$-Lipschitz. Then $\forall x \in \mathcal{X}$ and $\forall t \geq 1$ the ensemble posterior radius $\hat{\varepsilon}_t$, Eq. \eqref{eq:radius}, is bounded as:
    \begin{equation}
        \frac{1}{\sqrt{2}} \varepsilon_t L(x) \leq  \mathbb{E}_{c \sim Q_t} [\hat{\varepsilon}_t(x, c)] \leq \frac{3}{2\sqrt{2}} \varepsilon_t L(x).
    \end{equation}
    A full proof is provided in Appendix \ref{sec:post_radius_bound}.
\end{theorem}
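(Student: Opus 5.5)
The plan is to sandwich $\mathbb{E}_{c\sim Q_t}[\hat{\varepsilon}_t(x,c)]$ between two multiples of the Lipschitz-based robustness gap. That gap is controlled by Lemma~\ref{lemma:lip_bound}, applied to $g = \mu_t + \beta_t\sigma_t$. The natural bridge is the worst-case deviation
\[
\Delta_t(x) \defin \sup_{Q \in \mathbb{B}_{\varepsilon_t}(Q_t)} \bigl|\mathbb{E}_{c\sim Q}[g(x,c)] - \mathbb{E}_{c\sim Q_t}[g(x,c)]\bigr|,
\]
which Lemma~\ref{lemma:lip_bound} bounds by $\varepsilon_t L(x)$. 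I would relate $\hat{\varepsilon}_t$ pointwise to the two summands $\mu_{m,t}-\bar\mu_t$ and $\sigma_{m,t}-\bar\sigma_t$. I would then integrate against $Q_t$ and calibrate against the Lipschitz constant $L(x)=2\lambda^{-1/2}\beta_t\max_m L_m(x)$, so that the constants $1/\sqrt2$ and $3/(2\sqrt2)$ emerge.

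\textbf{Upper bound.} Pointwise I would use $\sqrt{a^2+b^2}\le |a|+|b|$ together with the fact that each expert deviation from the barycentre is a difference of functions that share the Lipschitz modulus established from Lemma 3 of \citet{micheli2025wassersteindistributionallyrobustbayesian}. The paper already records that $\hat{\varepsilon}_t$ is $L(x)/\sqrt2$-Lipschitz. Hence, for any coupling $\pi$ of $Q_t$ with a $Q\in\mathbb{B}_{\varepsilon_t}(Q_t)$, the transport estimate gives $\mathbb{E}_{Q_t}\hat{\varepsilon}_t \le \mathbb{E}_{Q}\hat{\varepsilon}_t + \tfrac{L(x)}{\sqrt2}\,W_2(Q,Q_t)$, the $W_2$ term dominating $W_1$ by Jensen. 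I would choose $Q$ as the distribution attaining (or approximately attaining) the worst case for $g$, on which $\hat{\varepsilon}_t$ is controlled by $\tfrac{1}{2\sqrt2}\varepsilon_t L(x)$ via the half-weight split of $L(x)$ between the $\mu$ and $\beta\sigma$ parts. Adding $\tfrac{1}{\sqrt2}+\tfrac{1}{2\sqrt2}=\tfrac{3}{2\sqrt2}$ gives the claimed constant.

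\textbf{Lower bound.} Here I would use $\sqrt{a^2+b^2}\ge (|a|+|b|)/\sqrt2$ and $\max_m \ge$ average, which gives $\hat{\varepsilon}_t \ge \tfrac{1}{\sqrt2}\cdot\tfrac1M\sum_m(|\mu_{m,t}-\bar\mu_t|+|\sigma_{m,t}-\bar\sigma_t|)$. This must then be compared with $\varepsilon_t L(x)$, presumably by identifying the radius $\varepsilon_t$ as calibrated from the ensemble dispersion, so that its Lipschitz-weighted size matches the expected expert disagreement.

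\textbf{Main obstacle.} I expect the lower bound to be the hardest step. Nothing in the stated assumptions forces experts to disagree: if all $\mu_{m,t}$ and $\sigma_{m,t}$ coincide, then $\hat{\varepsilon}_t\equiv 0$ while $\varepsilon_tL(x)>0$. The proof must therefore use an implicit choice or definition of $\varepsilon_t$ (e.g.\ $\varepsilon_t$ set so that $\varepsilon_t L(x)\le \sqrt2\,\mathbb{E}_{Q_t}\hat{\varepsilon}_t$), or a non-degeneracy of the ensemble from distinct RKHSs. My first attempt would be to make this calibration explicit and to check that it is compatible with the upper bound, which leaves the factor-$3/2$ window between the two bounds.
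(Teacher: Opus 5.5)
Your closing diagnosis is right, but it is a counterexample rather than an obstacle. The statement is false as written, in both directions, so no route can close it. The middle quantity $\mathbb{E}_{c\sim Q_t}[\hat{\varepsilon}_t(x,c)]$ is fixed by $Q_t$, the data, the kernels and $\lambda$. It involves neither $\varepsilon_t$ nor the particular Lipschitz constant, while both bounds are linear in $\varepsilon_t L(x)$. If $\mu_t+\beta_t\sigma_t$ is $L(x)$-Lipschitz it is also $2L(x)$-Lipschitz, and likewise $2\varepsilon_t$ is an admissible radius whenever $\varepsilon_t$ is. Applying the claim twice then gives $\sqrt{2}\,\varepsilon_t L(x)\le\mathbb{E}_{c\sim Q_t}[\hat{\varepsilon}_t(x,c)]\le\frac{3}{2\sqrt{2}}\varepsilon_t L(x)$, which is impossible when $\varepsilon_t L(x)>0$. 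Concretely:
\begin{itemize}
\item $M=1$ or identical kernels give $\hat{\varepsilon}_t\equiv 0$, violating the lower bound as soon as $\varepsilon_t>0$;
\item $\varepsilon_t=0$ (e.g.\ $P=Q_t=\delta_{c_0}$) with experts that disagree at $(x,c_0)$ violates the upper bound.
\end{itemize}
This is exactly where your upper-bound step breaks. You claim that on a worst-case $Q$ one has $\mathbb{E}_{Q}[\hat{\varepsilon}_t]\le\frac{1}{2\sqrt{2}}\varepsilon_t L(x)$ ``via the half-weight split of $L(x)$'', but this has no support. Lipschitz moduli and Lemma~\ref{lemma:lip_bound} control only differences $\mathbb{E}_Q[\cdot]-\mathbb{E}_{Q_t}[\cdot]$, never the level of $\hat{\varepsilon}_t$, and bounding $\mathbb{E}_Q[\hat{\varepsilon}_t]$ for some $Q$ in the ball is the original problem again. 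The calibration you float does not rescue the lower bound either. $\varepsilon_t$ is one number for all $x$, and it must dominate $W_2(P,Q_t)$ for Lemma~\ref{lemma:lip_bound} to be used with $Q=P$, as the paper does here and in the regret analysis. You would therefore need $\mathbb{E}_{c\sim Q_t}[\hat{\varepsilon}_t(x,c)]/L(x)$ to exceed $W_2(P,Q_t)/\sqrt{2}$ everywhere and to vary by at most a factor $3/2$ over $\mathcal{X}$, which is assuming the theorem.

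The paper's own proof does not supply the missing link; it reaches both constants through invalid steps. Write $u=\mathbb{E}_{c\sim P}[\hat{\varepsilon}_t(x,c)]-\mathbb{E}_{c\sim Q_t}[\hat{\varepsilon}_t(x,c)]$ and $a=\varepsilon_t L(x)/\sqrt{2}$.
\begin{itemize}
\item It gets $|u|\le a$ from Lemma~\ref{lemma:lip_bound}.
\item It then asserts, by a ``reverse triangle inequality'', the chain \eqref{t1:2}, which ties $\mathbb{E}_{c\sim P}[\hat{\varepsilon}_t]$ to $W_2$ distances between push-forwards of $P$ and $Q_t$ through $\hat{f}_{m,t}(x,\cdot)$ and $\bar{f}_t(x,\cdot)$. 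This is not a triangle inequality for anything appearing in $\hat{\varepsilon}_t$, which is a pointwise distance between two Gaussian marginals.
\item It then ``subtracts'' \eqref{t1:1} from \eqref{t1:2} endpoint by endpoint, lower from lower and upper from upper.
\end{itemize}
Subtracting $u\in[-a,a]$ correctly widens the chain to $-\mathbb{E}_{Q_t}[\hat{\varepsilon}_t]-a\le\mathbb{E}_{Q_t}[\hat{\varepsilon}_t]-W\le\mathbb{E}_{Q_t}[\hat{\varepsilon}_t]+a$, where $W$ is the sum of the two push-forward distances, and no positive lower bound follows. The paper's narrowed version $-\mathbb{E}_{Q_t}[\hat{\varepsilon}_t]+a\le\cdots\le\mathbb{E}_{Q_t}[\hat{\varepsilon}_t]-a$ is what manufactures $W\ge a$ and then $\mathbb{E}_{Q_t}[\hat{\varepsilon}_t]\ge a$. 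The factor $\frac{3}{2\sqrt{2}}$ comes from the same manipulation applied to \eqref{t1:5}, together with $W\le 2a$. What is actually provable along these lines is only the one-sided comparison $|u|\le\mathrm{Lip}_c(\hat{\varepsilon}_t)\,\varepsilon_t$, with $\mathrm{Lip}_c$ the Lipschitz modulus of $\hat{\varepsilon}_t(x,\cdot)$. The correct outcome of your analysis is therefore that the theorem, and the regret bounds that invoke it, cannot be established as stated, not that an added calibration closes the gap.
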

By applying Lemma~\ref{lemma:lip_bound} in combination with Theorem~\ref{thm:posterior_holder} to the non-robust acquisition function, Eq. \eqref{eq:inital_acqf}, we may discard the inherit maximin DRO problem, and get the following computationally tractable acquisition function for the learner:
\begin{equation}\label{eq:actual_acqf}
    x_t = \argmax_{x \in \mathcal{X}} \mathbb{E}_{c \sim Q_t}\left[\mu_t(x, c) +\beta_t \sigma_t(x, c) - \hat{\varepsilon}_t(x, c) \sqrt{2} \right].
\end{equation}
This distributionally robust acquisition function offers a fundamental improvement over existing approaches, such as \citet{Huang_Song_Xue_Qian_2024, micheli2025wassersteindistributionallyrobustbayesian}. $W_2(P, Q_t)$ is governed only by observed context samples. Instead of assuming and setting some rate of decay of $\varepsilon_t$, without any guarantees, the ensemble posterior radius, $\hat{\varepsilon}_t$, naturally decays from one iteration to another as $W_2(P, Q_t) \rightarrow 0$. 
Moreover, in contrast with the standard single GP paradigm, the proposed acquisition function also captures model-based epistemic uncertainty, and thus naturally protects from exploring in search space regions where base experts conflict.
Based on the proposed acquisition function we define the EDRBO algorithm in Algorithm~\ref{alg:edrbo}.

\begin{algorithm}[ht]
  \caption{General EDRBO}
  \label{alg:edrbo}
  \begin{algorithmic}
    \State {\bfseries Input:} Iteration budget $T$, $M$ GPs, initial data $\mathbb{D}_0$, trade-off parameter $\beta_t$
    \For{$t=1$ {\bfseries to} $T$}
        \For{$m=1$ {\bfseries to} $M$}
            \State $\mu_{m,t}, \sigma_{m,t} \fit \mathbb{D}_{t-1} $
        \EndFor
        \State $x_t = \argmax_{x \in \mathcal{X}} \mathbb{E}_{c \sim Q_t}\left[\mu_t(x, c) + \beta_t \sigma_t(x, c) - \hat{\varepsilon}_t(x, c)\sqrt{2} \right]$
        \State Observe $c_t \sim P$
        \State Observe $y_t = f(x_t, c_t) + \epsilon_t$
        \State $\mathbb{D}_t \leftarrow \mathbb{D}_{t-1} \bigcup \{(x_t, c_t, y_t)\}$
    \EndFor
  \end{algorithmic}
\end{algorithm}

We use regret, i.e., the difference between a prediction and the oracle, to measure the performance of the algorithm. The choice of oracle and the definition of regret is not unique. To obtain a consistent comparison with existing approaches, following \citet{Huang_Song_Xue_Qian_2024} and \citet{micheli2025wassersteindistributionallyrobustbayesian}, we take the definition of instantaneous expected regret. It captures the sub-optimality gap between an algorithm and the solution of the true stochastic optimisation problem with full knowledge of the black-box function $f$ and the true contextual distribution $P$:
\begin{align}\label{eq:regret}
    r_t &= \mathbb{E}_{c \sim P}[f(x^*, c)] - \mathbb{E}_{c \sim P}[f(x_t, c)], \\
    x^* &= \argmax_{x \in \mathcal{X}} \mathbb{E}_{c \sim P} [f(x, c)].
\end{align}

For the proposed acquisition function, Eq. \eqref{eq:actual_acqf}, we achieve the following sublinear cumulative regret bounds, that improve on state-of-the-art results.

\begin{theorem}\label{thm:cum_regret}
    Let $\delta \in (0, 1)$ be a probability of failure, $\mu_t +\beta_t \sigma_t$ be $L(x)$-Lipschitz, and $\ell = \max_{x \in \mathcal{X}} L(x) / \min_{x \in \mathcal{X}} L(x)$. Then with probability at least $1 - \delta$, $\forall t \geq 1$ the cumulative expected regret of Algorithm \ref{alg:edrbo} with $M$ surrogates after $T$ steps is bounded by:

    \begin{equation}
        R_T \leq 22 \ell \beta_T \sqrt{T\left(M \gamma_T + 2\log \frac{6}{\delta}\right)},
    \end{equation}

    which results in the $\mathcal{O}(\gamma_T \sqrt{T})$ general order of cumulative regret, where $\gamma_T$ is the worst-case maximum information gain. A full proof is provided in Appendix \ref{sec:regret}.
\end{theorem}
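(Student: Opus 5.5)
The bound follows the standard GP-UCB route: bound each instantaneous regret $r_t$ by a multiple of $\beta_t\,\mathbb{E}_{c\sim P}[\sigma_t(x_t,c)]$, then sum. The ensemble enters only through the averaged posterior width $\sigma_t = M^{-1}\sum_m \sigma_{m,t}$. The passage from the expectation over $P$ to the realised contexts $c_t$ is handled by a martingale concentration argument, which is what produces the $\log(6/\delta)$ term.

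\textbf{Step 1 (high-probability events).} Intersect three events, each given failure probability $\delta/3$ (hence the constant $6/\delta$ after two-sided concentration).
\begin{enumerate}
\item The confidence event of Lemma~\ref{lem:confidence}, so that $\mu_t-\beta_t\sigma_t\le f\le \mu_t+\beta_t\sigma_t$ uniformly.
\item The event $W_2(P,Q_t)\le\varepsilon_t$, so that $P\in\mathbb{B}_{\varepsilon_t}(Q_t)$.
\item An Azuma--Hoeffding event for the martingale difference sequence $\mathbb{E}_{c\sim P}[\sigma_t(x_t,c)]-\sigma_t(x_t,c_t)$, whose increments are bounded because $k_m\le 1$ implies $\sigma_{m,t}\le\lambda^{-1/2}$.
\end{enumerate}

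\textbf{Step 2 (per-step regret).} Write $a_t(x)=\mathbb{E}_{Q_t}[\mu_t+\beta_t\sigma_t-\sqrt2\,\hat\varepsilon_t](x)$. I would establish the chain
\[
\mathbb{E}_P f(x^*)\le \mathbb{E}_{Q_t}[\mu_t+\beta_t\sigma_t](x^*)+\varepsilon_tL(x^*) .
\]
This uses Lemma~\ref{lemma:lip_bound} applied to $\mu_t+\beta_t\sigma_t$, together with $P\in\mathbb{B}_{\varepsilon_t}(Q_t)$.

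The lower inequality of Theorem~\ref{thm:posterior_holder} gives $\varepsilon_tL(x)\le\sqrt2\,\mathbb{E}_{Q_t}\hat\varepsilon_t(x)$. Using this at $x^*$ would give the wrong sign for a term inside $a_t$. I would therefore rescale by $\ell$: since $L(x^*)\le \ell L(x_t)$, the Lipschitz penalty at $x^*$ can be traded for $\ell$ times the penalty at $x_t$. Combining with the maximality of $x_t$ for $a_t$ gives
\[
\mathbb{E}_P f(x^*)\le a_t(x_t)+c_1\ell\,\varepsilon_tL(x_t),
\]
using the upper inequality of Theorem~\ref{thm:posterior_holder} (the $3/(2\sqrt2)$ constant) to convert $\mathbb{E}_{Q_t}\hat\varepsilon_t(x_t)$ back to $\varepsilon_tL(x_t)$.

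On the other side, Lemma~\ref{lemma:lip_bound} and the confidence bound give
\[
\mathbb{E}_Pf(x_t)\ge \mathbb{E}_{Q_t}[\mu_t-\beta_t\sigma_t](x_t)-\varepsilon_tL(x_t).
\]
Subtracting yields
\[
r_t\le 2\beta_t\,\mathbb{E}_{Q_t}\sigma_t(x_t)+c_2\ell\,\varepsilon_tL(x_t).
\]
Moving $\mathbb{E}_{Q_t}$ to $\mathbb{E}_P$ costs another $\varepsilon_tL$, since $\sigma_t$ is Lipschitz.

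\textbf{Step 3 (the Lipschitz term).} This is the main obstacle: $\varepsilon_tL(x_t)$ must be dominated by the exploration term. Since $L(x)=2\lambda^{-1/2}\beta_t\max_mL_m(x)$ is proportional to $\beta_t$, I would take $\varepsilon_t$ to be of the order of an empirical-measure rate, so that $\sum_t\varepsilon_t=O(\sqrt T)$. This gives a contribution $\ell\beta_T O(\sqrt T)$, which is absorbed into the final bound. If the paper intends $\varepsilon_t$ to be controlled by $\hat\varepsilon_t$ itself, I would instead bound $\hat\varepsilon_t\le 2\max_m\sigma$-type terms via the dispersion decomposition in Eq.~\eqref{eq:radius}, i.e.\ bound the mean disagreement by the confidence widths. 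The various constants then accumulate to the stated $22$.

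\textbf{Step 4 (summation).} Sum over $t$, use the Azuma event to replace $\sum_t\mathbb{E}_P\sigma_t(x_t,\cdot)$ by $\sum_t\sigma_t(x_t,c_t)+O\!\left(\sqrt{T\log(6/\delta)}\right)$, and apply Cauchy--Schwarz:
\[
\sum_t\sigma_t\le\sqrt{T\sum_t\sigma_t^2}.
\]
Jensen's inequality gives $\sigma_t^2\le M^{-1}\sum_m\sigma_{m,t}^2$, and the standard elliptical-potential bound gives $\sum_t\sigma_{m,t}^2\le C\gamma_{m,T}$. Bounding $\sum_m\gamma_{m,T}\le M\gamma_T$ with $\gamma_T=\max_m\gamma_{m,T}$, and combining the square roots via $\sqrt a+\sqrt b\le\sqrt{2(a+b)}$, yields the $\sqrt{T(M\gamma_T+2\log(6/\delta))}$ form. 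Finally, $\beta_t\le\beta_T$ by monotonicity and $\beta_T=O(\sqrt{\gamma_T})$, which gives the order $\mathcal{O}(\gamma_T\sqrt T)$ for fixed $M$.
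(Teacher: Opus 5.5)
Your Step~2 is sound and has the same skeleton as the paper's Lemma~\ref{thm:inst_exp_regret}. Confidence bounds, Lemma~\ref{lemma:lip_bound}, maximality of $x_t$, the upper inequality of Theorem~\ref{thm:posterior_holder} at $x^*$, and $L(x^*)\le L_{\max}=\ell L_{\min}\le\ell L(x_t)$ together give $r_t\le 2\beta_t\,\mathbb{E}_{Q_t}[\sigma_t(x_t,c)]+c\,\ell\,\varepsilon_t L(x_t)$; the paper gets $c=3$.

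The gap is Step~3, which is where the stated bound is actually decided, and your primary plan there cannot prove it. The theorem assumes no decay rate for $\varepsilon_t$; the paper presents avoiding such an assumption as the point of $\hat{\varepsilon}_t$. Summing $\varepsilon_t L(x_t)$ directly leaves an additive term of order $\ell\beta_T\lambda^{-1/2}\max_{m,x}L_m(x)\sum_t\varepsilon_t$. Its constant involves $\lambda$, the kernel Lipschitz constants and the empirical-measure concentration constant. None of these occur in $22\ell\beta_T\sqrt{T(M\gamma_T+2\log(6/\delta))}$, so no accumulation of constants produces $22$. Nor is $\sum_t\varepsilon_t=O(\sqrt{T})$ generic for $W_2$. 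The empirical rate is already $t^{-1/4}$ for a two-point law on $\mathbb{R}$, it is of order $t^{-1/d_c}$ once $d_c>4$, and a radius valid for all $t$ simultaneously needs a union bound that adds $\log T$ factors.

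The missing idea is the one you mention only conditionally: the paper removes $\varepsilon_t$ altogether, and this has to be carried out rather than hedged. It applies the \emph{lower} inequality of Theorem~\ref{thm:posterior_holder} at $x_t$, namely $\varepsilon_t L(x_t)\le\sqrt{2}\,\mathbb{E}_{Q_t}[\hat{\varepsilon}_t(x_t,c)]$. This gives $r_t\le\mathbb{E}_{Q_t}[2\beta_t\sigma_t(x_t,c)+3\ell\hat{\varepsilon}_t(x_t,c)]$; the lower inequality really yields $3\sqrt{2}\ell$, but the paper records $3\ell$. It then proves $\hat{\varepsilon}_t\le 3\beta_t\max_m\sigma_{m,t}$ (Lemmas~\ref{lem:dogleash} and~\ref{lem:radius_bound}) by routing the mean disagreement through $f$: $|\mu_{m,t}-\mu_t|\le|\mu_{m,t}-f|+|f-\mu_t|\le\beta_t(\sigma_{m,t}+\sigma_t)$, using both the single-expert and the ensemble confidence bounds. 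The factor $\beta_t$ is essential and is absent from your ``$2\max_m\sigma$'', because the experts' means are only known to lie within $\beta$-scaled widths of $f$.

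This also changes Step~4. The radius now contributes $9\ell\beta_t\,\mathbb{E}_{Q_t}[\max_m\sigma_{m,t}(x_t,c)]$, and a maximum over experts cannot be handled by your Jensen averaging $\sigma_t^2\le M^{-1}\sum_m\sigma_{m,t}^2$. That averaging yields $\gamma_T$ and leaves your $\sum_m\gamma_{m,T}\le M\gamma_T$ with nothing to attach to. The paper instead uses $\sigma_t\le\max_m\sigma_{m,t}$ and $\max_m\sigma_{m,t}^2\le\sum_m\sigma_{m,t}^2$, which is exactly where $M\gamma_T$ comes from. Cauchy--Schwarz, Jensen, Lemma~7 of \citet{kirschner2020distributionally} and $\sigma^2\le 2\log(1+\sigma^2)$ then give $11\ell\beta_T\sqrt{T(4M\gamma_T+8\log(6/\delta))}$ with $11=2+3\cdot 3$, i.e.\ the stated $22$.

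Be aware that this route makes the whole bound hinge on the lower inequality of Theorem~\ref{thm:posterior_holder}. That inequality compares $\hat{\varepsilon}_t$, which does not depend on the chosen radius and vanishes identically when $M=1$, with $\varepsilon_t L(x)$. Summing $\varepsilon_t$ explicitly, as you first proposed, is what a WDRBO-style analysis does, but it proves a different bound from the one stated.
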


\section{Experiments}\label{sec:experiments}

Our algorithm, EDRBO, follows the steps outlined in Section \ref{sec:method} and Algorithm \ref{alg:edrbo}. To see how it stacks up, we compare it against a variety of benchmarks that incorporate context and robustness in different ways, ranging from simple empirical models to more rigorous distributionally robust frameworks.

The \textbf{UCB} algorithm \citep{Srinivas_2012} serves as our most straightforward baseline since it skips the context variable entirely. Instead, it uses a Gaussian process to model rewards and optimizes the resulting upper confidence bound. This makes it a useful check to see if modelling context actually adds value to a specific problem or if a simpler approach suffices.

The rest of baselines include most commonly used DRBO methods. The integration of DRO into the BO framework was pioneered by \citet{kirschner2020distributionally}. \textbf{DRBO-MMD} works by discretizing the context space and finding a solution that holds up against the worst-case distribution within a set MMD budget. While they show that the problem could be formulated for any divergence $\Delta$, their reliance on Maximum Mean Discrepancy creates significant scaling issues; the resulting minimax problem becomes computationally prohibitive unless the context space is kept discrete and low-dimensional.

Since DRBO-MMD can be computationally heavy, \textbf{DRBO-MMD Minmax} \citep{tay2022efficient} offers a faster alternative. They employ Taylor expansions to conduct minmax sensitivity analysis across various ambiguity sets to achieve finer discretization without the same heavy computational overhead. Both DRBO-MMD and DRBO-MMD Minmax methods have linear regret bounds with respect to the iteration budget, limiting their long-term performance guarantees.

On the other hand, \citet{husain2023distributionallyrobustbayesianoptimization} focus on tractability. They extend the DRBO framework to $\varphi$-divergences such as Kullback-Leibler divergence and Total Variation distance. They prove that the minimax problem can be reduced to a simpler single-stage optimisation problem, thus making the algorithm much more practical. Nevertheless, their approach introduces implicit constraints regarding the support of the distributions within the ambiguity set. To handle the challenges of continuous context without relying on discretisation, \textbf{SBO-KDE} \citep{Huang_Song_Xue_Qian_2024} estimates the context distribution using kernel density estimation and then maximizes the expected UCB through sampling from this estimate within a $\varphi$-divergence DRBO setup. Since these density estimates can sometimes be unreliable, \textbf{DRBO-KDE} \citep{Huang_Song_Xue_Qian_2024} adds a layer of protection by placing a total variation ambiguity set around the estimated distribution. This hedging makes the method more resilient to potential model misspecification.

\begin{figure*}
  \centering
  \includegraphics[width=\linewidth]{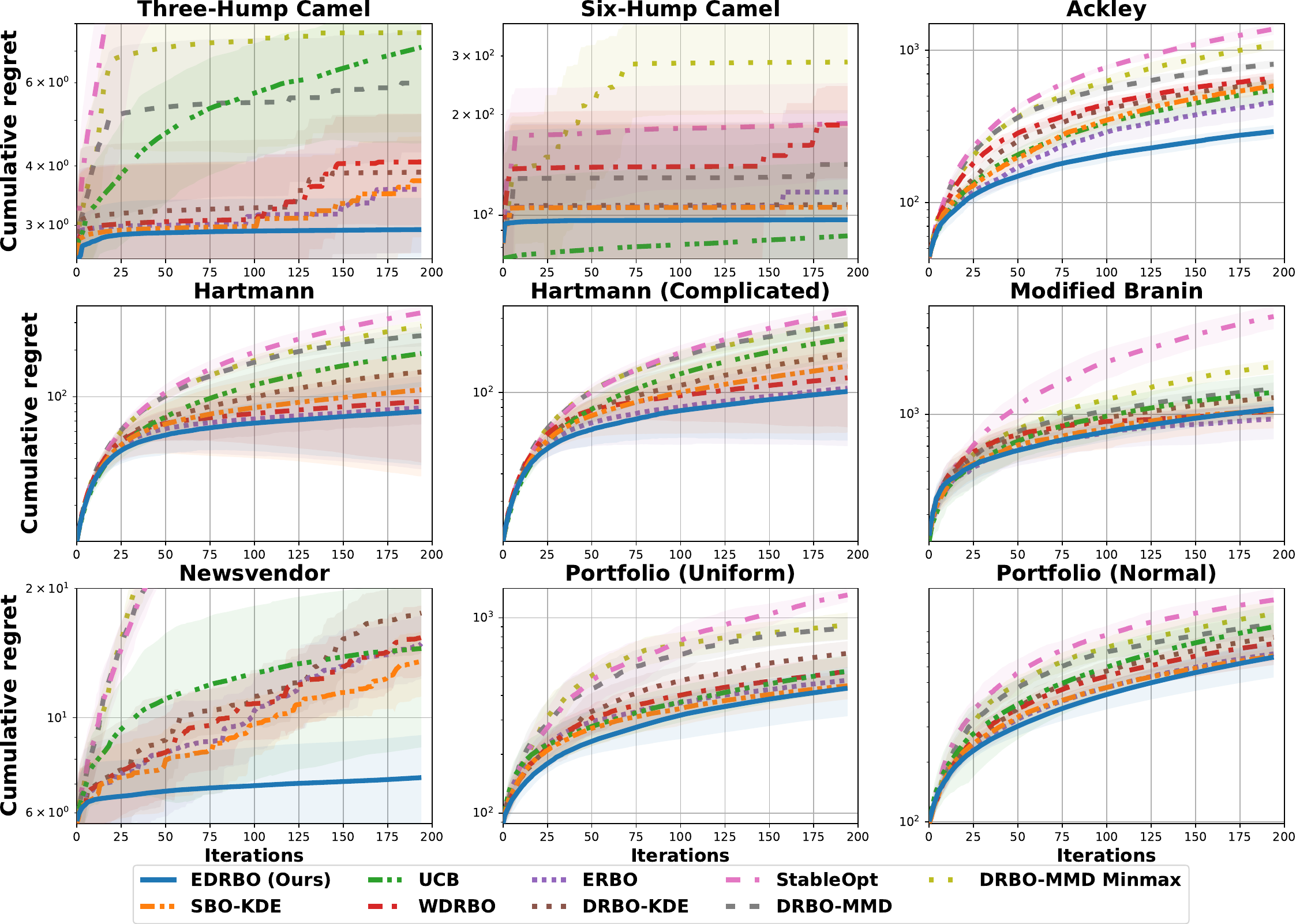}
  \caption{Mean and standard deviation of cumulative expected regret.}\label{fig:res_plots}
\end{figure*}

\textbf{StableOpt} \citep{bogunovic2018adversarially} takes a different view by looking at robustness through the lens of stability. It identifies the best design by looking for the one that performs best even under the worst plausible context in a set $C_t$. As suggested by \citep{Huang_Song_Xue_Qian_2024}, we define this set using intervals based on the empirical mean and variance of the context dimensions. This gives us a minimax perspective that explicitly guards against statistically likely but unfavorable context shifts.

Most recently \citet{micheli2025wassersteindistributionallyrobustbayesian} applied Wasserstein DRO to robustify the BO confidence bound acquisition function. Much like \citet{husain2023distributionallyrobustbayesianoptimization}, they bypass the maximin bottleneck by utilising the Lipschitz properties of the objective function over Wasserstein balls, i.e., the Wasserstein DRO ambiguity set. Their work effectively fills a gap in the literature concerning continuous context distributions. \textbf{WDRBO} \citep{micheli2025wassersteindistributionallyrobustbayesian}, which creates a Wasserstein ambiguity set around the observed contexts. The radius of this set, $\varepsilon_t = O(1/\sqrt{t})$, naturally tightens as more data arrives, making the model less conservative over time. As a point of comparison, we also include \textbf{ERBO} \citep{micheli2025wassersteindistributionallyrobustbayesian}, which is essentially a non-robust version of WDRBO that sets the radius to zero. By simply maximizing the average performance across observed contexts, ERBO serves as a context-aware but non-robust baseline.

\begin{table*}[t]
\centering
\caption{Mean $\pm$ standard deviation of final cumulative regret across all experiments. The best and the second best results per experiment are in bold and underlined, respectively. The Average rank is computed over all experiments.}\label{tab:results}
\resizebox{\textwidth}{!}{%
\begin{tabular}{@{}lccccc@{}}
\toprule
\textbf{Method} & \textbf{Three-Hump Camel}     & \textbf{Six-Hump Camel}       & \textbf{Ackley}                   & \textbf{Hartmann}             & \textbf{Complicated Hartmann} \\
\midrule
EDRBO (Ours)     & \textbf{2.9 $\pm$ 0.5}       & \underline{96.34 $\pm$ 85.47} & \textbf{195.02 $\pm$ 67.8}        & \textbf{62.73 $\pm$ 30.08}    & \textbf{70.93 $\pm$ 36.8}   \\
SBO-KDE          & \underline{3.13 $\pm$ 1.09 } & 105.08 $\pm$ 81.19            & 332.33 $\pm$ 160.31               & 79.05 $\pm$ 53.63             & 93.89 $\pm$ 62.65           \\
UCB              & 5.44 $\pm$ 2.03              & \textbf{81.23 $\pm$ 56.36}    & 318.12 $\pm$ 151.63               & 112.51 $\pm$ 65.41            & 125.46 $\pm$ 69.64          \\
WDRBO            & 3.39 $\pm$ 1.11              & 145.8 $\pm$ 65.43             & 409.87 $\pm$ 175.31               & 72.6 $\pm$ 36.98              & 88.49 $\pm$ 44.9            \\
ERBO             & 3.15 $\pm$ 1.06              & 107.7 $\pm$ 83.5              & \underline{269.17 $\pm$ 127.19}   & \underline{66.9 $\pm$ 35.55}  & \underline{75.35 $\pm$ 43.37}           \\
DRBO-KDE         & 3.44 $\pm$ 1.31              & 106.68 $\pm$ 82.88            & 377.28 $\pm$ 171.38               & 94.97 $\pm$ 58.46             & 106.97 $\pm$ 58.22          \\
StableOpt        & 30.81 $\pm$ 18.31            & 178.98 $\pm$ 65.19            & 734.37 $\pm$ 388.01               & 185.61 $\pm$ 99.74            & 173.42 $\pm$ 92.84          \\
DRBO-MMD         & 5.38 $\pm$ 1.96              & 129.74 $\pm$ 36.26            & 510.58 $\pm$ 217.32               & 153.08 $\pm$ 70.14            & 159.04 $\pm$ 76.65          \\
DRBO-MMD Minmax  & 7.08 $\pm$ 2.99              & 248.36 $\pm$ 103.97           & 599.73 $\pm$ 303.7                & 161.13 $\pm$ 78.21            & 159.01 $\pm$ 76.64          \\
\bottomrule
\end{tabular}}
\vspace{1em}
\resizebox{\textwidth}{!}{%
\begin{tabular}{@{}lccccc@{}}
\toprule
\textbf{Method} & \textbf{Modified Branin}          & \textbf{Newsvendor}           & \textbf{Portfolio (Uniform)}      & \textbf{Portfolio (Normal)} & \textbf{Average rank} \\
\midrule
EDRBO (Ours)     & \underline{726.01 $\pm$ 257.04}  & \textbf{6.89 $\pm$ 1.61}      & \textbf{299.11 $\pm$ 120.27}      & \textbf{420.49 $\pm$ 171.92} & \textbf{1.22 $\pm$ 0.44} \\
SBO-KDE          & 754.6 $\pm$ 309.47               & \underline{9.6 $\pm$ 2.63}    & \underline{325.17 $\pm$ 105.61}   & \underline{447.37 $\pm$ 160.39} & 2.89 $\pm$ 0.93          \\
UCB              & 908.2 $\pm$ 460.49               & 12.07 $\pm$ 5.5               & 358.22 $\pm$ 134.73               & 574.02 $\pm$ 293.95          & 5.00  $\pm$ 1.94         \\
WDRBO            & 807.67 $\pm$ 233.47              & 10.71 $\pm$ 3.43              & 373.11 $\pm$ 138.79               & 505.98 $\pm$ 202.28          & 4.44 $\pm$ 1.33          \\
ERBO             & \textbf{685.02 $\pm$ 266.35}     & 10.44 $\pm$ 3.45              & 347.44 $\pm$ 105.25               & 448.87 $\pm$ 169.05          & \underline{2.67 $\pm$ 1.12}          \\
DRBO-KDE         & 883.97 $\pm$ 313.51              & 11.63 $\pm$ 4.05              & 437.22 $\pm$ 179.21               & 533.35 $\pm$ 224.77          & 5.00 $\pm$ 0.5          \\
StableOpt        & 2301.76 $\pm$ 1461.35            & 45.27 $\pm$ 24.17             & 734.99 $\pm$ 366.79               & 807.47 $\pm$ 354.31          & 8.78 $\pm$ 0.44          \\
DRBO-MMD         & 987.42 $\pm$ 371.17              & 36.18 $\pm$ 15.76             & 597.6 $\pm$ 247.67                & 644.69 $\pm$ 255.29          & 6.89 $\pm$ 0.6          \\
DRBO-MMD Minmax  & 1241.81 $\pm$ 588.03             & 47.0 $\pm$ 24.36              & 644.61 $\pm$ 254.66               & 691.56 $\pm$ 296.39          & 8.11 $\pm$ 0.6          \\
\bottomrule
\end{tabular}}
\end{table*}

We test the algorithms on a range of synthetic and real-world problems with various context distributions. Each algorithm runs for $200$ iterations, including $5$ starting points, at $10$ different random seeds. Fig.~\ref{fig:res_plots} presents the mean and standard deviation of cumulative regret over these runs. The final values of cumulative regret are presented in Table~\ref{tab:results}. We adapt existing experimental setups \citep{Huang_Song_Xue_Qian_2024, micheli2025wassersteindistributionallyrobustbayesian}. A detailed description of all experiments can be found in Appendix \ref{app:exp}.

Empirical results provide significant insight into how EDRBO compares with existing approaches. The method achieves the best average ranking of $1.22$ across all experiments. Specifically, EDRBO produces the best result in all experiments but Six-Hump Camel and Modified Branin, where it is second only to UCB and ERBO, respectively. Simultaneously, these methods together with WDRBO have the largest rank standard deviations, which make them most inconsistent. While EDRBO notably prevails over baseline methods on Three-Hump Camel, Ackley, and Newsvendor functions, it improves marginally on the rest, which leads to the following insights.

The main advantage of EDRBO is its handling of highly non-convex landscapes with multiple local extrema. In this case, optimisation typically takes place near vanishing gradients regions. Lipschitz-based regularised methods, such as WDRBO, rely on a linear growth assumption, which makes them less stable in these areas. In more severe scenarios, where the cost landscape exhibits localised, sharply varying structures, e.g., Ackley function, the expected cost ($\mu$) and the predictive uncertainty ($\sigma$) become strongly intertwined. In contrast to standard DRBO baselines that mostly regularise only first-order deviations, EDRBO’s use of the Bures–Wasserstein distance yields a joint second-order stability guarantee.

One may observe this behaviour in the cumulative regret plots. Most noticeably for SBO-KDE and WDRBO baselines on both Camel and Newsvendor functions. While the proposed method produces a stable and smooth curve, existing DRBO methods tend to produce a jagged, staircase-like result.

We also note experiments where the interaction between decision variables and context is strongly nonlinear and non-separable. A key example is the modified Branin function. Similarly, in the Portfolio optimization problems, the objective is not given analytically but instead arises as the posterior mean of a Gaussian process fitted to sampled data. This leads to nonstationary smoothness, local irregularities, and artifacts from finite data.

In conclusion, the proposed method is optimally suited for iterative decision-making under contextual uncertainty in environments where the objective function is: (i) highly non-convex, leading to frequent occurrences of stationary points, and (ii) sensitive to joint first- and second-order moment variations.

\section{Conclusions}\label{sec:discussion}

In this work, we present Ensemble Distributionally Robust Bayesian Optimisation, a new model-driven algorithm for sequential decision making under context distributional uncertainty with no restrictive assumptions on context. The method relies on robustifying the acquisition function using Lipschitz regularisation within the Wasserstein ambiguity set and subsequent estimation that exploits the choice of surrogate - an ensemble model. In contrast with many existing distributionally robust approaches, ours yields a tractable optimisation problem in practice, while being capable of handling continuous context and having cumulative expected regret guarantees that improve on state-of-the-art methods. On a wide range of problems EDRBO attains superior to existing approaches performance, thus attesting to the theoretical results.

%%===================================================%%
%% For presentation purpose, we have included        %%
%% \bigskip command. Please ignore this.             %%
%%===================================================%%
% \bigskip
% \begin{flushleft}%
% Editorial Policies for:

% \bigskip\noindent
% Springer journals and proceedings: \url{https://www.springer.com/gp/editorial-policies}

% \bigskip\noindent
% Nature Portfolio journals: \url{https://www.nature.com/nature-research/editorial-policies}

% \bigskip\noindent
% \textit{Scientific Reports}: \url{https://www.nature.com/srep/journal-policies/editorial-policies}

% \bigskip\noindent
% BMC journals: \url{https://www.biomedcentral.com/getpublished/editorial-policies}
% \end{flushleft}

\begin{appendices}

\section{Missing Proofs}\label{app:proofs}

\subsection{Regression confidence guarantees}\label{sec:confidence}

To prove Lemma \ref{lem:confidence} we require the following lemma:

\begin{lemma}[Theorem 2 of \citet{DBLP:journals/corr/ChowdhuryG17}]\label{lem:single_confidence}
    Let $k: \Xi \times \Xi \rightarrow \mathbb{R}$ be the positive semidefinite kernel defining the GP surrogate, with RKHS $\mathcal{H}_k$ and norm $\|\cdot\|_{\mathcal{H}_k}$. Let $f: \Xi \rightarrow \mathbb{R}$, $f \in \mathcal{H}_k$, $\|f\|_{\mathcal{H}_k} \leq B_m$, and let $\epsilon_t$ be $R$-sub-Gaussian. Let $\delta \in (0, 1)$ be a probability of failure. Then with probability at least $1 - \delta$, $\forall \xi \in \Xi$ and $\forall t\geq 1$

    \begin{equation}
        |\mu_{m,t}(\xi) - f(\xi)| \leq \beta_{m,t} \sigma_{m,t}(\xi),
    \end{equation}

    with

    \begin{equation}\label{eq:beta_single}
        \beta_{m,t} = B_m  + R \sqrt{2\left(\gamma_{m,t} + 1 + \log \frac{1}{\delta}\right)}
    \end{equation}
\end{lemma}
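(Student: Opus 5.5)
The statement to prove is Lemma~\ref{lem:single_confidence}, which is cited as Theorem 2 of \citet{DBLP:journals/corr/ChowdhuryG17}. My plan is to reproduce the standard self-normalised martingale argument in the RKHS $\mathcal{H}_k$, adapted to the scaling conventions of Eqs.~\eqref{eq:mean}--\eqref{eq:cov}. Write $\sigma_{m,t}^2(\xi)=k_{m,t}(\xi,\xi)$ and introduce the feature map $\phi(\xi)=k(\xi,\cdot)$. Let $\Phi_t$ be the operator stacking $\phi(\xi_1),\dots,\phi(\xi_t)$, and set $V_t=\Phi_t^*\Phi_t+\lambda I$. By the push-through identity, $\mu_{m,t}(\xi)=\langle \phi(\xi), V_t^{-1}\Phi_t^*\textbf{y}_t\rangle_{\mathcal{H}_k}$ and $\lambda\sigma_{m,t}^2(\xi)=\lambda\|\phi(\xi)\|^2_{V_t^{-1}}$, so $\|\phi(\xi)\|_{V_t^{-1}}$ is the quantity multiplying the confidence width.

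\textbf{Step 1: error decomposition.} Substitute $\textbf{y}_t=\Phi_t f+\boldsymbol{\epsilon}_t$ to obtain
\[
\mu_{m,t}(\xi)-f(\xi)=-\lambda\langle\phi(\xi),V_t^{-1}f\rangle+\langle\phi(\xi),V_t^{-1}\Phi_t^*\boldsymbol{\epsilon}_t\rangle .
\]
Apply Cauchy--Schwarz in the $V_t^{-1}$ geometry. The bias term is bounded by $\lambda\|f\|_{V_t^{-1}}\|\phi(\xi)\|_{V_t^{-1}}\le \sqrt{\lambda}B_m\|\phi(\xi)\|_{V_t^{-1}}$, using $V_t\succeq\lambda I$. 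The noise term is bounded by $\|\Phi_t^*\boldsymbol{\epsilon}_t\|_{V_t^{-1}}\|\phi(\xi)\|_{V_t^{-1}}$.

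\textbf{Step 2: uniform-in-time noise bound.} This is the main obstacle. The aim is to show that, with probability $1-\delta$ simultaneously for all $t$,
\[
\|\Phi_t^*\boldsymbol{\epsilon}_t\|^2_{(\Phi_t^*\Phi_t+\eta I)^{-1}}\le 2R^2\log\!\big(\sqrt{\det(I+\eta^{-1}\textbf{K}_t)}/\delta\big).
\]
I would use the method of mixtures: a Gaussian prior on the RKHS direction gives a nonnegative supermartingale, via the sub-Gaussian moment generating function, conditioned on the filtration generated by $(\xi_\tau,\epsilon_{\tau-1})$. Ville's maximal inequality then gives uniformity over $t$. For infinite-dimensional $\mathcal{H}_k$, the determinant should be expressed through the Gram matrix, because Fredholm determinants coincide with it.

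Chowdhury and Gopalan use $\eta=1+2/t$ rather than exactly $\lambda$. The $+1$ slack is then absorbed through $\log\det(I+\eta^{-1}K)\le 2\gamma_t$ together with an extra additive constant, and this is what produces $\gamma_{m,t}+1+\log(1/\delta)$ in Eq.~\eqref{eq:beta_single}. I would follow that trick: bound $\|\cdot\|_{V_t^{-1}}$ by the $\eta$-regularised norm up to a factor, and use $\tfrac12\log\det(I+\eta^{-1}\textbf{K}_t)\le\gamma_{m,t}$ via Eq.~\eqref{eq:gauss_mig}.

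\textbf{Step 3: combine and match scaling.} Adding the two bounds gives $|\mu_{m,t}-f|\le (B_m+R\sqrt{2(\gamma_{m,t}+1+\log(1/\delta))})\cdot\sqrt{\lambda}\|\phi(\xi)\|_{V_t^{-1}}$. The $1/\lambda$ factor in Eq.~\eqref{eq:cov} makes $\sigma_{m,t}(\xi)=\|\phi(\xi)\|_{V_t^{-1}}$ up to the chosen normalisation. I would check that the constants align, taking $\lambda=\eta$ as in the cited work if necessary. Since the bound holds for all $\xi$ and $t$ on a single event, the claim follows.
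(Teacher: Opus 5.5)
\textbf{Overall.} The paper gives no proof of this lemma; it imports it from Chowdhury and Gopalan. Your Steps 1--2 (bias/noise split, self-normalised bound via a Gaussian mixture, Ville's inequality for uniformity in $t$) are the argument behind that citation. In infinite dimensions the clean way to realise the mixture is to draw an independent $h\sim\mathcal{GP}(0,\lambda^{-1}k)$ and use only $h(\xi_1),\dots,h(\xi_t)$, which gives the Gram determinant directly.

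\textbf{The gap is in Step 3.} This is exactly where the paper's normalisation departs from the cited one. Since Eq.~\eqref{eq:cov} makes $\sigma_{m,t}(\xi)=\|\phi(\xi)\|_{V_t^{-1}}$ exactly, Step 1 actually gives
\[
|\mu_{m,t}(\xi)-f(\xi)|\le\bigl(\sqrt{\lambda}\,B_m+\|\Phi_t^*\boldsymbol{\epsilon}_t\|_{V_t^{-1}}\bigr)\,\sigma_{m,t}(\xi).
\]
The $\sqrt{\lambda}$ belongs to the bias term only. Factoring it out of the noise term, as you do, is invalid for $\lambda<1$. For $\lambda>1$ it leaves $\sqrt{\lambda}\,\beta_{m,t}\sigma_{m,t}(\xi)$, which is weaker than the claim. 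So Step 3 as written proves the lemma only at $\lambda=1$.

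Appealing to the cited work's $\lambda=1+2/T$ does not settle this. Their constant is stated for their convention $\sigma^2=k-\mathbf{k}^T(\mathbf{K}+\lambda\mathbf{I})^{-1}\mathbf{k}$, without the $1/\lambda$ of Eq.~\eqref{eq:cov}. Nor can a sharper argument rescue large $\lambda$. If $k(\xi,\xi_s)=0$ for all observed $\xi_s$ and $f=B_m k(\xi,\cdot)/\sqrt{k(\xi,\xi)}$, then $\mu_{m,t}(\xi)=0$ and $|\mu_{m,t}(\xi)-f(\xi)|=\sqrt{\lambda}\,B_m\sigma_{m,t}(\xi)$ deterministically. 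For large $\lambda$ this exceeds $\beta_{m,t}\sigma_{m,t}(\xi)$. The missing ingredient is the hypothesis $\lambda\le 1$, which the statement leaves implicit; under it $\sqrt{\lambda}B_m\le B_m$.

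\textbf{The source of the $+1$.} Your account of it should also be corrected. Chowdhury and Gopalan fix $\lambda=1+\eta$ with $\eta=2/T$ for the whole run, not $\eta=1+2/t$. A round-dependent regulariser would not fit a single supermartingale and Ville's inequality. Their $+1$ arises because their Theorem 1 bounds the noise through $\det((1+\eta)\mathbf{I}+\mathbf{K}_t)=(1+\eta)^t\det(\mathbf{I}+\lambda^{-1}\mathbf{K}_t)$, and $\tfrac t2\log(1+\eta)\le 1$ for $t\le T$.

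With your own Step 2 bound taken at $\eta=\lambda$, none of this is needed. Eq.~\eqref{eq:gauss_mig} has no factor $\tfrac12$, so $\log\det(\mathbf{I}+\lambda^{-1}\mathbf{K}_{m,t})\le\gamma_{m,t}$. Hence $\|\Phi_t^*\boldsymbol{\epsilon}_t\|_{V_t^{-1}}\le R\sqrt{2(\gamma_{m,t}/2+\log(1/\delta))}\le R\sqrt{2(\gamma_{m,t}+1+\log(1/\delta))}$, and the $+1$ is pure slack. Combined with $\lambda\le1$ in the bias term, this yields exactly $\beta_{m,t}$ of Eq.~\eqref{eq:beta_single}, on a single event for all $\xi$ and $t$.
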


\begin{lemma}[Lemma \ref{lem:confidence} in the main text]
    Let $\Xi \in \mathbb{R}^d$ and $f: \Xi \rightarrow \mathbb{R}$. Each expert assumes that $f \in \mathcal{H}_m$, with $\|f\|_{\mathcal{H}_m} \leq B_m$ and let $\epsilon_t$ be $R$-sub-Gaussian. Let $\delta \in (0, 1)$ be a probability of failure. With probability at least $1-\delta$, $\forall \xi \in \Xi$ and $\forall t\geq 1$:
    \begin{equation}
        |\mu_t(\xi) - f(\xi)| \leq \beta_t \sigma_t(\xi),
    \end{equation}
    with
    \begin{equation}
       \sigma_t(\xi) = \frac{1}{M} \sum_{m=1}^M \sigma_{m, t} (\xi) 
    \end{equation}
    and 
    \begin{equation}
        \beta_t = \max_m B_m  + R \sqrt{2\left(\gamma_{m,t} + 1 + \log \frac{M}{\delta}\right)}.
    \end{equation}
\end{lemma}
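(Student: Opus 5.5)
The proof combines the single-expert bounds of Lemma~\ref{lem:single_confidence} through a union bound, then averages them using the triangle inequality.

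\emph{Step 1 (union bound).} For each $m \in \{1,\dots,M\}$, apply Lemma~\ref{lem:single_confidence} to expert $m$ with failure probability $\delta/M$. This is legitimate because each expert satisfies $f \in \mathcal{H}_m$ with $\|f\|_{\mathcal{H}_m} \le B_m$, and all experts share the same $R$-sub-Gaussian noise. Let $E_m$ be the event that, for all $\xi \in \Xi$ and all $t \ge 1$,
\begin{equation*}
|\mu_{m,t}(\xi) - f(\xi)| \le \beta_{m,t}^{(\delta/M)}\sigma_{m,t}(\xi), \qquad \beta_{m,t}^{(\delta/M)} = B_m + R\sqrt{2\left(\gamma_{m,t}+1+\log\tfrac{M}{\delta}\right)}.
\end{equation*}
Each $E_m$ fails with probability at most $\delta/M$. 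The union bound therefore gives $\Pr\bigl(\bigcap_m E_m\bigr) \ge 1-\delta$. No independence between experts is needed, which matters because the experts are fitted on the same data.

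\emph{Step 2 (averaging).} On $\bigcap_m E_m$, use the definition of $\mu_t$ in Eq.~\eqref{eq:bbf_mean_approx} and the triangle inequality:
\begin{equation*}
|\mu_t(\xi) - f(\xi)| = \Bigl|\frac{1}{M}\sum_{m=1}^M \bigl(\mu_{m,t}(\xi)-f(\xi)\bigr)\Bigr| \le \frac{1}{M}\sum_{m=1}^M \beta_{m,t}^{(\delta/M)}\sigma_{m,t}(\xi).
\end{equation*}

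\emph{Step 3 (uniform coefficient).} Since $\sigma_{m,t}\ge 0$, replace each $\beta_{m,t}^{(\delta/M)}$ by the largest one, $\max_{m}\beta_{m,t}^{(\delta/M)}$. This factors out of the sum and leaves exactly $\sigma_t(\xi)$ from Eq.~\eqref{eq:std_approx}.

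\emph{Main obstacle.} The delicate step is matching this maximum to the stated $\beta_t$. The expression $\max_m B_m + R\sqrt{2(\gamma_{m,t}+1+\log\frac{M}{\delta})}$ leaves the index in $\gamma_{m,t}$ unresolved. I would interpret it as $\gamma_{m,t}$ replaced by $\max_m \gamma_{m,t}$, the worst-case information gain used later in Theorem~\ref{thm:cum_regret}. Under this reading the bound follows from
\begin{equation*}
\max_m\bigl(B_m + R\sqrt{2(\gamma_{m,t}+1+\log\tfrac{M}{\delta})}\bigr) \le \max_m B_m + R\sqrt{2\bigl(\max_m\gamma_{m,t}+1+\log\tfrac{M}{\delta}\bigr)},
\end{equation*}
using monotonicity of the square root. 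The resulting bound holds simultaneously for all $\xi$ and $t$ on the single event $\bigcap_m E_m$.
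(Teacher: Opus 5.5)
Your proof is correct and follows the paper's argument: apply Lemma~\ref{lem:single_confidence} to each expert at level $\delta/M$, average with the triangle inequality, and bound every $\beta_{m,t}$ by a common $\beta_t$. Your explicit union bound and your reading of the unresolved index in $\gamma_{m,t}$ as $\max_m \gamma_{m,t}$ make precise what the paper leaves implicit in its step ``taking $\beta_t = \max_m \beta_{m,t}$''. That reading also matches how $\beta_T$ is later bounded, using $B = \max_m B_m$ and $\gamma_T = \max_m \gamma_{m,T}$, in the proof of Theorem~\ref{thm:cum_regret}.
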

\vspace{-8mm}
\begin{proof}

\begin{align}
    \lvert \mu_t(\xi) - f(\xi) \rvert &= \left\lvert \frac{1}{M} \sum_{m=1}^M \mu_{m,t}(\xi) - \frac{1}{M} \sum_{m=1}^M f(\xi) \right\rvert = \\
    &= \left\lvert \frac{1}{M} \sum_{m=1}^M \left( \mu_{m,t}(\xi) - f(\xi) \right) \right\rvert \leq \\
    &\markthis{(i)}{l3_1_1}{\leq} \frac{1}{M} \sum_{m=1}^M \left\lvert \mu_{m,t}(\xi) - f(\xi) \right\rvert \leq \\
    &\markthis{(ii)}{l3_1_2}{\leq} \frac{1}{M} \sum_{m=1}^M \beta_{m,t} \sigma_{m,t}(\xi) \leq \\
    &\markthis{(iii)}{l3_1_3}{\leq} \beta_t \sigma_t(\xi),
\end{align}

where \ref{l3_1_1} follows from sub-additivity of absolute value, \ref{l3_1_2} follows from applying Lemma \ref{lem:single_confidence} that holds with probability at least $1 - \frac{\delta}{M}$ for each base expert, and \ref{l3_1_3} follows from taking $\beta_t = \max_{m} \beta_{m, t}$.
\end{proof}

\subsection{Posterior Radius Bounds}\label{sec:post_radius_bound}

\begin{theorem}[Theorem \ref{thm:posterior_holder} in main text]
    Let $\mu_t +\beta_t \sigma_t$ be $L(x)$-Lipschitz. Then $\forall x \in \mathcal{X}$ and $\forall t \geq 1$ the ensemble posterior radius $\hat{\varepsilon}_t$, Eq. \eqref{eq:radius}, is bounded as:
    \begin{equation}
        \frac{1}{\sqrt{2}}\varepsilon_t L(x) \leq \mathbb{E}_{c \sim Q_t} [\hat{\varepsilon}_t(x, c)] \leq \frac{3}{2\sqrt{2}} \varepsilon_t L(x).
    \end{equation}
\end{theorem}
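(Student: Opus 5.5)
The plan is to sandwich $\hat{\varepsilon}_t$ pointwise between two linear functionals of the expert disagreements, and then average these bounds over $c \sim Q_t$. Lemma~\ref{lemma:lip_bound} converts the averaged quantities into $\varepsilon_t L(x)$.

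\emph{Step 1 (norm equivalence in $\mathbb{R}^2$).} Write $a_m(\xi) = |\mu_{m,t}(\xi) - \bar{\mu}_t(\xi)|$ and $b_m(\xi) = |\sigma_{m,t}(\xi) - \bar{\sigma}_t(\xi)|$. Then $\hat{\varepsilon}_t(\xi) = \max_m \|(a_m, b_m)\|_2$, and the elementary inequalities
\begin{equation*}
\tfrac{1}{\sqrt{2}}(a_m + b_m) \leq \sqrt{a_m^2 + b_m^2} \leq a_m + b_m
\end{equation*}
give a pointwise two-sided bound on $\hat{\varepsilon}_t$ by $\max_m (a_m + b_m)$. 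The factor $1/\sqrt{2}$ in the lower bound of Theorem~\ref{thm:posterior_holder} should come from here. This is also consistent with the remark before the theorem that $\hat{\varepsilon}_t$ is $L(x)/\sqrt{2}$-Lipschitz.

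\emph{Step 2 (expert disagreement versus the robust regulariser).} Each $a_m$ is the deviation of one Lipschitz function, $\mu_{m,t}$, from the ensemble average $\bar{\mu}_t$; similarly for $b_m$ and $\sigma_{m,t}$. Using $\beta_t \geq 1$, I will bound $a_m + b_m$ by the deviation of the combined quantity $\mu_t + \beta_t \sigma_t$. I will then take expectations over $Q_t$ and compare with the worst-case shift over $\mathbb{B}_{\varepsilon_t}(Q_t)$, for which Lemma~\ref{lemma:lip_bound} gives exactly $\varepsilon_t L(x)$.

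For the upper bound I expect a triangle-inequality split. The mean part contributes $\varepsilon_t L(x)/2$ and the spread part contributes at most $\varepsilon_t L(x)$, since $\mu_t$ and $\beta_t\sigma_t$ are each $L(x)/2$-Lipschitz, while $\sigma_{m,t}-\bar{\sigma}_t$ can carry twice that. This would yield the constant $\tfrac{3}{2}$, and after Step~1 the stated $\tfrac{3}{2\sqrt{2}}$. For the lower bound I will use the reverse inequality of Step~1 together with the tightness of Lemma~\ref{lemma:lip_bound}: the Wasserstein-DRO regulariser is attained for Lipschitz losses (Gao--Kleywegt). The worst-case perturbation of $\mu_t + \beta_t\sigma_t$ within the ball should therefore be realised by the spread across experts.

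\emph{Main obstacle.} The hard step is Step~2. It requires linking a quantity defined purely by inter-expert disagreement at fixed $\xi$ to $\varepsilon_t$, which is a radius in context-distribution space. Nothing a priori ties the two together, so some identification must be made.

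My first attempt will be to treat $\varepsilon_t$ as calibrated to the ensemble. Concretely, I would view each expert's posterior as a candidate context perturbation in $\mathbb{B}_{\varepsilon_t}(Q_t)$, so that the expert spread in expectation realises the Wasserstein ball's worst-case shift of the $L(x)$-Lipschitz acquisition. I would then check that the constants survive the averaging $\mathbb{E}_{c\sim Q_t}$ and the $\max_m$.

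If this identification cannot be justified, the fallback is to prove the upper bound alone, using Lipschitz continuity and the triangle inequality. The lower bound would then be stated under an explicit assumption that the ball is realised by expert disagreement.
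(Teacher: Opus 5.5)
The obstacle you isolate in Step~2 is fatal, not technical. As written, the statement is false, so neither your primary route nor your fallback can succeed.

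The middle term $\mathbb{E}_{c\sim Q_t}[\hat{\varepsilon}_t(x,c)]$ depends only on the fitted experts and on $Q_t$, not on $\varepsilon_t$. Both bounds, however, grow linearly in $\varepsilon_t$. The claim would therefore confine the radius to the window $\tfrac{2\sqrt{2}}{3}E \le \varepsilon_t L(x) \le \sqrt{2}E$, where $E=\mathbb{E}_{c\sim Q_t}[\hat{\varepsilon}_t(x,c)]$.
\begin{itemize}
\item Lower bound: enlarging the ball keeps $P$ inside it but breaks the bound. Identical experts (e.g.\ $M=1$) also break it, since then $\hat{\varepsilon}_t\equiv 0$ while $L(x)>0$. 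Moreover, any larger $L(x)$ is also a valid Lipschitz constant, so the lower bound cannot hold for an arbitrary one unless $\varepsilon_t=0$.
\item Upper bound: take a deterministic context $P=\delta_{c_0}$. The empirical $Q_t$ equals $P$, so $\varepsilon_t=0$ is admissible. Yet after a single observation at some $x_1\neq x$, an SE expert and a Mat\'ern expert already have different posterior standard deviations at $(x,c_0)$. Hence $\hat{\varepsilon}_t(x,c_0)>0$ and the bound fails.
\end{itemize}

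In particular, your fallback is wrong too. Lipschitz continuity in $c$ and Lemma~\ref{lemma:lip_bound} control only how an expectation \emph{moves} when the context law moves inside the ball, i.e.\ $|\mathbb{E}_{Q}[g]-\mathbb{E}_{Q_t}[g]|$. They never control the \emph{size} of $\mathbb{E}_{Q_t}[g]$. Your $a_m=|\mu_{m,t}-\bar{\mu}_t|$ and $b_m$ compare two different functions at the same $\xi$. So the amounts $\varepsilon_tL(x)/2$ and $\varepsilon_tL(x)$ you assign to them bound distributional shifts of $\mu_t$ and $\beta_t\sigma_t$, not $\mathbb{E}_{Q_t}[a_m]$ or $\mathbb{E}_{Q_t}[b_m]$. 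Gao--Kleywegt tightness likewise concerns $\sup_{Q\in\mathbb{B}_{\varepsilon_t}(Q_t)}\mathbb{E}_Q[g]$ and says nothing about inter-expert spread. Finally, ``calibrating'' $\varepsilon_t$ to the ensemble only turns the bound into the definition of an $x$-dependent radius. For such a radius nothing guarantees $W_2(P,Q_t)\le\varepsilon_t$, which is the only reason Lemma~\ref{lemma:lip_bound} was relevant in the first place.

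You also cannot import the missing identification from the paper's proof, because it does not contain a valid one. The paper proceeds in three steps:
\begin{itemize}
\item It applies Lemma~\ref{lemma:lip_bound} to $\hat{\varepsilon}_t$ to get \eqref{t1:1}.
\item It asserts, without derivation, a ``reverse triangle inequality'' \eqref{t1:2}. This involves push-forward distances over contexts such as $W_2(\hat{f}_{m,t}(x,\cdot)\#P,\hat{f}_{m,t}(x,\cdot)\#Q_t)$, which are unrelated to the posterior $W_2$ over function values that defines $\hat{\varepsilon}_t$.
\item It then subtracts \eqref{t1:1} from \eqref{t1:2} termwise: lower bound from lower bound, upper bound from upper bound.
\end{itemize}
From $a\le U\le b$ and $c\le V\le d$ one only gets $a-d\le U-V\le b-c$. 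Done correctly, the $\varepsilon_tL(x)/\sqrt{2}$ terms enter with the opposite signs and nothing of the claimed form survives. That invalid subtraction is exactly where $\varepsilon_tL(x)$ gets attached to $\mathbb{E}_{c\sim Q_t}[\hat{\varepsilon}_t]$.

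What is genuinely provable is one of the following:
\begin{itemize}
\item a bound in terms of posterior spread, such as $\hat{\varepsilon}_t\le 3\beta_t\max_m\sigma_{m,t}$ on the confidence event (Lemma~\ref{lem:radius_bound});
\item a variation bound, $|\mathbb{E}_Q[\hat{\varepsilon}_t]-\mathbb{E}_{Q_t}[\hat{\varepsilon}_t]|\le\varepsilon_t$ times the Lipschitz constant of $\hat{\varepsilon}_t$ in $c$.
\end{itemize}
Any two-sided comparison with $\varepsilon_tL(x)$ needs an extra hypothesis linking the radius to the ensemble. Your instinct to state such a hypothesis explicitly is right, but it is needed for the upper bound as much as for the lower one.
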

\vspace{-8mm}
\begin{proof}
    Without loss of generality fix $t \geq 1$ and $m$. For $L(x)$-Lipschitz UCB, $\hat{\varepsilon}_t$ is also $L(x)/\sqrt{2}$-Lipschitz. Then by Lemma \ref{lemma:lip_bound}, $\forall x \in \mathcal{X}$
    \begin{align}
        \left| \mathbb{E}_{c \sim P} \left[ \hat{\varepsilon}_t(x, c) \right] - \mathbb{E}_{c \sim Q_t} \left[ \hat{\varepsilon}_t(x, c) \right] \right| \leq \varepsilon_t L(x)/\sqrt{2} \  \\
        -\varepsilon_t L(x)/\sqrt{2} \leq \left| \mathbb{E}_{c \sim P} \left[ \hat{\varepsilon}_t(x, c) \right] - \mathbb{E}_{c \sim Q_t} \left[ \hat{\varepsilon}_t(x, c) \right] \right| \leq \varepsilon_t L(x)/\sqrt{2}. \label{t1:1}
    \end{align}
    Also, by the reverse triangle inequality for $\mathbb{E}_{c \sim Q_t} \left[ \hat{\varepsilon}_t(x, c) \right], \forall x \in X$:
    \begin{align*}
        &\mathbb{E}_{c \sim Q_t} \left[ \hat{\varepsilon}_t(x, c) \right] \geq \\
        &\geq \left| \mathbb{E}_{c \sim P} \left[ \hat{\varepsilon}_t(x, c) \right] - \left( W_2(\hat{f}_{m, t}(x, \cdot)\#P, \hat{f}_{m, t}(x, \cdot)\#Q_t) + W_2(\bar{f}_t(x, \cdot)\#P, \bar{f}_t(x, \cdot)\#Q_t) \right)\right| \\
    \end{align*}
    \begin{align}
        -\mathbb{E}_{c \sim Q_t} \left[ \hat{\varepsilon}_t(x, c) \right] &\leq \mathbb{E}_{c \sim P} \left[ \hat{\varepsilon}_t(x, c) \right] - \label{t1:2}\\
        &- \left( W_2(\hat{f}_{m, t}(x, \cdot)\#P, \hat{f}_{m, t}(x, \cdot)\#Q_t) + W_2(\bar{f}_t(x, \cdot)\#P, \bar{f}_t(x, \cdot)\#Q_t) \right) \leq  \notag\\
        &\leq \mathbb{E}_{c \sim Q_t} \left[ \hat{\varepsilon}_t(x, c) \right]. \notag
    \end{align}
    Subtracting Eq. \eqref{t1:1} from Eq. \eqref{t1:2} gives that $\forall x \in \mathcal{X}$:
    \begin{align}
        &-\mathbb{E}_{c \sim Q_t} \left[ \hat{\varepsilon}_t(x, c) \right] + \varepsilon_t L(x)/\sqrt{2} \leq \label{eq:3lower}\\
        &\leq \mathbb{E}_{c \sim Q_t} \left[ \hat{\varepsilon}_t(x, c) \right] - \left( W_2(\hat{f}_{m, t}(x, \cdot)\#P, \hat{f}_{m, t}(x, \cdot)\#Q_t) + W_2(\bar{f}_t(x, \cdot)\#P, \bar{f}_t(x, \cdot)\#Q_t) \right) \leq \label{eq:3upper}\\
        &\leq \mathbb{E}_{c \sim Q_t} \left[ \hat{\varepsilon}_t(x, c) \right] - \varepsilon_t L(x)/\sqrt{2} \notag.
    \end{align}
    The upper bound, Eq. \eqref{eq:3upper}, gives
    \begin{equation}\label{eq:w2_lower}
        W_2(\hat{f}_{m, t}(x, \cdot)\#P, \hat{f}_{m, t}(x, \cdot)\#Q_t) + W_2(\bar{f}_t(x, \cdot)\#P, \bar{f}_t(x, \cdot)\#Q_t) \geq \varepsilon_t L(x)/\sqrt{2}.
    \end{equation}
    Applying this to the lower bound, Eq. \eqref{eq:3lower}, gives
    \begin{align*}
        &\mathbb{E}_{c \sim Q_t} \left[ \hat{\varepsilon}_t(x, c) \right] \geq \\
        &\geq \frac{1}{2} \left( \varepsilon_t L(x)/\sqrt{2} + W_2(\hat{f}_{m, t}(x, \cdot)\#P, \hat{f}_{m, t}(x, \cdot)\#Q_t) + W_2(\bar{f}_t(x, \cdot)\#P, \bar{f}_t(x, \cdot)\#Q_t) \right) \geq \\
        &\geq \frac{1}{2} \left( \varepsilon_t L(x)/\sqrt{2} + \varepsilon_t L(x)/\sqrt{2} \right) \geq \\
        &\geq \varepsilon_t L(x)/\sqrt{2},
    \end{align*}
    which is the desired lower bound.
    Now, rewrite Eq. \eqref{t1:2} as follows:
    \begin{align}
        &-\mathbb{E}_{c \sim Q_t} \left[ \hat{\varepsilon}_t(x, c) \right] + \left( W_2(\hat{f}_{m, t}(x, \cdot)\#P, \hat{f}_{m, t}(x, \cdot)\#Q_t) + W_2(\bar{f}_t(x, \cdot)\#P, \bar{f}_t(x, \cdot)\#Q_t) \right)\leq \notag\\
        & \leq \mathbb{E}_{c \sim P} \left[ \hat{\varepsilon}_t(x, c) \right] \leq \label{t1:5} \\
        &\leq \mathbb{E}_{c \sim Q_t} \left[ \hat{\varepsilon}_t(x, c) \right] + \left( W_2(\hat{f}_{m, t}(x, \cdot)\#P, \hat{f}_{m, t}(x, \cdot)\#Q_t) + W_2(\bar{f}_t(x, \cdot)\#P, \bar{f}_t(x, \cdot)\#Q_t) \right). \notag
    \end{align}
    Subtracting Eq. \eqref{t1:1} from Eq. \eqref{t1:5} gives that $\forall x \in \mathcal{X}$:
    \begin{align}
        \mathbb{E}_{c \sim Q_t} \left[ \hat{\varepsilon}_t(x, c) \right] &- \varepsilon_t L(x)/\sqrt{2} + \label{eq:6lower} \\
        &+\left( W_2(\hat{f}_{m, t}(x, \cdot)\#P, \hat{f}_{m, t}(x, \cdot)\#Q_t) + W_2(\bar{f}_t(x, \cdot)\#P, \bar{f}_t(x, \cdot)\#Q_t) \right) \leq \notag \\
        &\leq \mathbb{E}_{c \sim Q_t} \left[ \hat{\varepsilon}_t(x, c) \right] \leq \notag\\
        \leq \mathbb{E}_{c \sim Q_t} \left[ \hat{\varepsilon}_t(x, c) \right] & + \varepsilon_t L(x)/\sqrt{2} + \label{eq:6upper}\\ 
        & +\left( W_2(\hat{f}_{m, t}(x, \cdot)\#P, \hat{f}_{m, t}(x, \cdot)\#Q_t) + W_2(\bar{f}_t(x, \cdot)\#P, \bar{f}_t(x, \cdot)\#Q_t) \right) \notag.
    \end{align}
    To show the desired bound, first consider $\forall g \in \left\{ \hat{f}_{m, t}, \bar{f}_t \right\}$ and the optimal coupling of $P$ and $Q_t: \pi^* \in \Pi(P, Q_t)$. Since $g$ is $L(x)/2$-Lipschitz
    \begin{align*}
        W_2(g(x, \cdot)\#P, g(x, \cdot)\#Q_t) &= \left( \int \left| g(x, c) -g(x, c') \right| d \pi^*(c, c') \right)^{\frac{1}{2}} \leq \\
        \leq \left( \int \frac{L^2(x)}{4} \|c - c'\| d \pi^*(c, c') \right)^{\frac{1}{2}} &= \frac{L(x)}{2} \left( \int \|c - c'\| d \pi^*(c, c') \right)^{\frac{1}{2}}  \leq \varepsilon_t L(x)/\sqrt{2}.
    \end{align*}
    Applying this result to both $\hat{f}_{m, t}$ and $\bar{f}_t$ when considering the upper bound of Eq. \eqref{eq:6upper} gives
    \begin{align*}
        &\mathbb{E}_{c \sim Q_t} \left[ \hat{\varepsilon}_t(x, c) \right] \leq \\
        &\leq \frac{1}{2} \left( \varepsilon_t L(x)/\sqrt{2} + W_2(\hat{f}_{m, t}(x, \cdot)\#P, \hat{f}_{m, t}(x, \cdot)\#Q_t) + W_2(\bar{f}_t(x, \cdot)\#P, \bar{f}_t(x, \cdot)\#Q_t) \right) \leq \\
        &\leq \frac{1}{2} \left( \varepsilon_t L(x)/\sqrt{2} + \varepsilon_t L(x)/\sqrt{2} + \varepsilon_t L(x)/\sqrt{2} \right) \leq \\
        &\leq \frac{3}{2\sqrt{2}} \varepsilon_t L(x),
    \end{align*}
    which is the desired upper bound, thus concluding the proof.
\end{proof}

\subsection{Regret Bounds}\label{sec:regret}

To prove Theorem \ref{thm:cum_regret} we require the following lemmas:

\begin{lemma}\label{thm:inst_exp_regret}
Let $\delta \in (0, 1)$ be a probability of failure, $\mu_t +\beta_t \sigma_t$ be $L(x)$-Lipschitz, and $\ell = \max_{x \in \mathcal{X}} L(x) / \min_{x \in \mathcal{X}} L(x)$. Then with probability at least $1 - \delta$, $\forall x \in \mathcal{X}$ and $\forall t \geq 1$ the instantaneous expected regret, Eq. \eqref{eq:regret}, can be bounded as follows:
\begin{equation}
r_t \leq \mathbb{E}_{c \sim Q_t}[2 \beta_t \sigma_t(x_t, c) + 3 \ell \hat{\varepsilon}_t(x_t, c)].
\end{equation}
\end{lemma}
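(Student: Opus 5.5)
Throughout I work on the single event of probability at least $1-\delta$ from Lemma~\ref{lem:confidence}, so that $|\mu_t(\xi)-f(\xi)|\le\beta_t\sigma_t(\xi)$ for all $\xi\in\Xi$ and all $t$. I write $\mathrm{UCB}_t=\mu_t+\beta_t\sigma_t$ and $\mathrm{LCB}_t=\mu_t-\beta_t\sigma_t$, so that $\mathrm{LCB}_t\le f\le\mathrm{UCB}_t$ pointwise. I also write $\alpha_t(x)=\mathbb{E}_{c\sim Q_t}[\mathrm{UCB}_t(x,c)-\sqrt{2}\,\hat{\varepsilon}_t(x,c)]$ for the acquisition of Eq.~\eqref{eq:actual_acqf}, so $x_t$ maximises $\alpha_t$. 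The plan is the standard UCB decomposition
\[
r_t=\mathbb{E}_P[f(x^*,c)]-\mathbb{E}_P[f(x_t,c)],
\]
in which the optimal term is bounded above and the played term below, each via the $P\to Q_t$ transfer of Lemma~\ref{lemma:lip_bound}. The resulting $\varepsilon_t L(\cdot)$ terms are then converted into $\hat{\varepsilon}_t$ terms by Theorem~\ref{thm:posterior_holder}, and the factor $\ell$ arises where I compare $L(x^*)$ with $L(x_t)$. This requires $W_2(P,Q_t)\le\varepsilon_t$, i.e.\ $P\in\mathbb{B}_{\varepsilon_t}(Q_t)$, which I take as the standing assumption underlying the ambiguity set.

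\textbf{Upper bound at $x^*$.} Since $f\le\mathrm{UCB}_t$ and $\mathrm{UCB}_t$ is $L(x)$-Lipschitz in $c$, Lemma~\ref{lemma:lip_bound} gives
\[
\mathbb{E}_P[f(x^*,c)]\le\mathbb{E}_{Q_t}[\mathrm{UCB}_t(x^*,c)]+\varepsilon_tL(x^*).
\]
Write $\varepsilon_tL(x^*)=\sqrt{2}\,\varepsilon_tL(x^*)-(\sqrt{2}-1)\varepsilon_tL(x^*)$. The upper bound of Theorem~\ref{thm:posterior_holder} gives $\mathbb{E}_{Q_t}[\hat{\varepsilon}_t(x^*,c)]\le\frac{3}{2\sqrt2}\varepsilon_tL(x^*)$, which is not the direction needed. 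So I use the lower bound instead: $\varepsilon_tL(x^*)\le\sqrt2\,\mathbb{E}_{Q_t}[\hat{\varepsilon}_t(x^*,c)]$. This yields
\[
\mathbb{E}_P[f(x^*,c)]\le\alpha_t(x^*)+2\sqrt2\,\mathbb{E}_{Q_t}[\hat{\varepsilon}_t(x^*,c)]\le\alpha_t(x_t)+2\sqrt2\,\mathbb{E}_{Q_t}[\hat{\varepsilon}_t(x^*,c)],
\]
where the last step uses optimality of $x_t$. The residual term at $x^*$ is then transported to $x_t$ by applying Theorem~\ref{thm:posterior_holder} twice:
\[
\mathbb{E}_{Q_t}[\hat{\varepsilon}_t(x^*,c)]\le\tfrac{3}{2\sqrt2}\varepsilon_tL(x^*)\le\tfrac{3}{2\sqrt2}\,\ell\,\varepsilon_tL(x_t)\le\tfrac32\,\ell\,\mathbb{E}_{Q_t}[\hat{\varepsilon}_t(x_t,c)].
\]

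\textbf{Lower bound at $x_t$.} Since $f\ge\mathrm{LCB}_t=\mathrm{UCB}_t-2\beta_t\sigma_t$ and Lemma~\ref{lemma:lip_bound} applies, I get
\[
\mathbb{E}_P[f(x_t,c)]\ge\mathbb{E}_{Q_t}[\mathrm{UCB}_t(x_t,c)-2\beta_t\sigma_t(x_t,c)]-\varepsilon_tL(x_t).
\]
Again $\varepsilon_tL(x_t)\le\sqrt2\,\mathbb{E}_{Q_t}[\hat{\varepsilon}_t(x_t,c)]$. Subtracting the two bounds, the $\mathrm{UCB}_t$ terms cancel and leave $2\beta_t\sigma_t(x_t,\cdot)$ plus a multiple of $\mathbb{E}_{Q_t}[\hat{\varepsilon}_t(x_t,c)]$ with coefficient of the form $a+b\ell$.

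\textbf{Main obstacle.} The delicate step is the constant bookkeeping needed to land exactly on $3\ell$. A naive count gives roughly $\sqrt2+\sqrt2+2\sqrt2\cdot\frac32\ell$, which exceeds $3\ell$. I would tighten it in two ways. First, I would not insert $-\sqrt2\hat{\varepsilon}_t$ and then add it back at $x^*$; instead I would compare directly, writing $\alpha_t(x^*)+\sqrt2\,\mathbb{E}_{Q_t}\hat{\varepsilon}_t(x^*)$ and subtracting $\alpha_t(x_t)$. Second, I would absorb the $x_t$-side term $\varepsilon_tL(x_t)\le\sqrt2\,\mathbb{E}\hat{\varepsilon}_t(x_t)$ against the $\sqrt2\hat{\varepsilon}_t$ already contained in $\alpha_t(x_t)$, so that these cancel. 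Only the $x^*$ contribution then remains:
\[
\varepsilon_tL(x^*)+\sqrt2\,\mathbb{E}\hat{\varepsilon}_t(x^*)\le\Bigl(1+\tfrac32\Bigr)\varepsilon_tL(x^*)\le\tfrac{5}{2}\,\ell\,\varepsilon_tL(x_t)\le\tfrac{5}{\sqrt2}\,\ell\,\mathbb{E}\hat{\varepsilon}_t(x_t).
\]
If $5/\sqrt2\approx3.54>3$ persists, I would use $\ell\ge1$ to reshuffle the constants, or else accept a slightly larger constant. Finally, since $\delta$ enters only through the single event of Lemma~\ref{lem:confidence}, the bound holds for all $t$ simultaneously.
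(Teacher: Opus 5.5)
Your route is the paper's route: the UCB/LCB sandwich on the event of Lemma~\ref{lem:confidence}, the $P\to Q_t$ transfer of Lemma~\ref{lemma:lip_bound} at $x^*$ and at $x_t$, the lower bound of Theorem~\ref{thm:posterior_holder} to trade $\varepsilon_t L$ for $\sqrt2\,\mathbb{E}_{Q_t}[\hat\varepsilon_t]$, optimality of $x_t$ for Eq.~\eqref{eq:actual_acqf}, the upper bound of Theorem~\ref{thm:posterior_holder} at $x^*$, and $L(x^*)\le \ell L(x_t)$. Stating $W_2(P,Q_t)\le\varepsilon_t$ explicitly is good, since the paper uses it without saying so. Both you and the paper also apply Lemma~\ref{lemma:lip_bound} to $\mu_t-\beta_t\sigma_t$. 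That needs the main-text fact that $\mu_t$ and $\beta_t\sigma_t$ are each $L(x)/2$-Lipschitz, not only the stated hypothesis on $\mu_t+\beta_t\sigma_t$. Your cancellation of $\varepsilon_tL(x_t)\le\sqrt2\,\mathbb{E}_{Q_t}[\hat\varepsilon_t(x_t,c)]$ against the $-\sqrt2\,\hat\varepsilon_t(x_t,\cdot)$ inside the acquisition value is the correct form of the paper's step (iii). The paper writes that term with the wrong sign and then drops it. With this cancellation, your first pass already gives the paper's intermediate bound $r_t\le 2\beta_t\mathbb{E}_{Q_t}[\sigma_t(x_t,c)]+2\sqrt2\,\mathbb{E}_{Q_t}[\hat\varepsilon_t(x^*,c)]$. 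After your transport step this becomes $3\sqrt2\,\ell$, with no additive $\sqrt2+\sqrt2$ as in your ``naive count''.

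The real gap is the constant. Your tightened bookkeeping ends at $\frac{5}{\sqrt2}\ell\approx3.54\,\ell$, and the proposed repair via $\ell\ge1$ cannot work. The only surviving term, $\varepsilon_tL(x^*)+\sqrt2\,\mathbb{E}_{Q_t}[\hat\varepsilon_t(x^*,c)]\le\frac52\varepsilon_tL(x^*)$, is multiplied by $\ell$ in full when moved to $x_t$, and $\ell\ge1$ can only enlarge terms. No rearrangement helps either. Take $\mathbb{E}_{Q_t}[\hat\varepsilon_t(x_t,c)]=\varepsilon_tL(x_t)/\sqrt2$, $\mathbb{E}_{Q_t}[\hat\varepsilon_t(x^*,c)]=\frac{3}{2\sqrt2}\varepsilon_tL(x^*)$ and $L(x^*)=\ell L(x_t)$. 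This satisfies every inequality used, and for it $\varepsilon_tL(x^*)+\varepsilon_tL(x_t)+\sqrt2\,\mathbb{E}_{Q_t}[\hat\varepsilon_t(x^*,c)-\hat\varepsilon_t(x_t,c)]$ equals exactly $\frac{5}{\sqrt2}\ell\,\mathbb{E}_{Q_t}[\hat\varepsilon_t(x_t,c)]$. So, as written, you prove the lemma only with $\frac{5}{\sqrt2}\ell$ in place of $3\ell$.

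This is not a missing idea on your part. The paper obtains $3\ell$ through $2\sqrt2\,\mathbb{E}_{Q_t}[\hat\varepsilon_t(x^*,c)]\le 3\varepsilon_tL_{\max}\le 3\ell\,\varepsilon_tL(x_t)$. Then, in step (vii), it replaces $\varepsilon_tL(x_t)$ by $\mathbb{E}_{Q_t}[\hat\varepsilon_t(x_t,c)]$. The lower bound of Theorem~\ref{thm:posterior_holder} only gives $\varepsilon_tL(x_t)\le\sqrt2\,\mathbb{E}_{Q_t}[\hat\varepsilon_t(x_t,c)]$. Done correctly, the paper's chain yields $3\sqrt2\,\ell$. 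Your direct comparison, which avoids converting $\varepsilon_tL(x^*)$ into $\sqrt2\,\hat\varepsilon_t$ and back, improves this to $\frac{5}{\sqrt2}\ell$. State the lemma with that constant rather than trying to force $3\ell$. Downstream, only the numerical constant $22$ in Theorem~\ref{thm:cum_regret} changes, not the $\mathcal{O}(\gamma_T\sqrt{T})$ rate.
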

\vspace{-8mm}
\begin{proof}
\begin{align}
    r_t &= \mathbb{E}_{c \sim P}[f(x^*, c)] - \mathbb{E}_{c \sim P}[f(x_t, c)] \leq \\
    &\markthis{(i)}{t2_1}{\leq} \mathbb{E}_{c \sim P}[\mu_t(x^*, c) + \beta_t \sigma_t(x^*, c)] - \mathbb{E}_{c \sim P}[\mu_t(x_t, c) - \beta_t \sigma_t(x_t, c)] \leq   \\
    &\markthis{(ii)}{t2_2}{\leq} \mathbb{E}_{c \sim Q_t}[\mu_t(x^*, c) + \beta_t \sigma_t(x^*, c)] + \varepsilon_t L(x^*) - \\
    &\hspace{44mm} - \mathbb{E}_{c \sim Q_t}[\mu_t(x_t, c) - \beta_t \sigma_t(x_t, c)] + \varepsilon_t L(x_t) \leq \notag \\
    &\markthis{(iii)}{t2_3}{\leq}  \mathbb{E}_{c \sim Q_t}[\mu_t(x^*, c) + \beta_t \sigma_t(x^*, c) - \hat{\varepsilon}_t(x^*, c)\sqrt{2} ]  + \mathbb{E}_{c \sim Q_t}[2\sqrt{2} \hat{\varepsilon}_t(x^*, c)] - \\
    &\hspace{44mm} - \mathbb{E}_{c \sim Q_t}[\mu_t(x_t, c) - \beta_t \sigma_t(x_t, c) + \hat{\varepsilon}_t(x_t, c)\sqrt{2} ]\leq \notag \\
    &\markthis{(iv)}{t2_4}{\leq}  \mathbb{E}_{c \sim Q_t}[\mu_t(x_t, c) + \beta_t \sigma_t(x_t, c) - \hat{\varepsilon}_t(x_t, c)\sqrt{2} ] + 2\sqrt{2} \hat{\varepsilon}_t(x^*) - \\
    &\hspace{44mm} -\mathbb{E}_{c \sim Q_t}[\mu_t(x_t, c) - \beta_t \sigma_t(x_t, c) + \hat{\varepsilon}_t(x_t, c)\sqrt{2} ] = \notag \\
    &=  2 \mathbb{E}_{c \sim Q_t}[\beta_t \sigma_t(x_t, c)] + 2\sqrt{2} \mathbb{E}_{c \sim Q_t}[\hat{\varepsilon}_t(x^*, c)] \leq \\
    &\markthis{(v)}{t2_5}{\leq} \mathbb{E}_{c \sim Q_t}[2 \beta_t \sigma_t(x_t, c)] + 3\varepsilon_t L_{\text{max}} \leq  \\
    &\markthis{(vi)}{t2_6}{\leq} 2 \mathbb{E}_{c \sim Q_t}[\beta_t \sigma_t(x_t, c)] + 3\varepsilon_t L_{\text{max}} \frac{L(x_t)}{L(x_t)} \leq \\ 
    &\markthis{(vii)}{t2_7}{\leq} \mathbb{E}_{c \sim Q_t}[2 \beta_t \sigma_t(x_t, c) + 3\ell \hat{\varepsilon}_t(x_t, c) ], 
\end{align}
where \ref{t2_1} follows Lemma \ref{lem:confidence}, \ref{t2_2} follows from Lemma \ref{lemma:lip_bound}, \ref{t2_3} follows from applying the lower bound of Theorem \ref{thm:posterior_holder} and adding and subtracting $\hat{\varepsilon}_t(x^*)\sqrt{2} $, \ref{t2_4} follows from $x_t$ being the maximiser of \eqref{eq:actual_acqf}, \ref{t2_5} follows from applying the upper bound of Theorem \ref{thm:posterior_holder} and taking $L_{\text{max}} = \max_{x\in \mathcal{X}}L(x)$, \ref{t2_6} follows from multiplying and dividing the last term by $L(x_t)$ noting the assumption $\forall x \in \mathcal{X}, L(x) > 0$, \ref{t2_7} follows from applying the lower bound of \ref{thm:posterior_holder} and taking $\ell = \frac{L_{\text{max}}}{L_{\text{min}}}, L_{\text{min}} = \min_{x \in \mathcal{X}} L(x) $.
\end{proof}

% Lemma \ref{lem:radius_bound} we require the following lemma:

\begin{lemma}\label{lem:dogleash}
    Let $\delta \in (0, 1)$ be a probability of failure, and let $\beta_t, \mu_t, \sigma_t$ be defined as in Eq. \ref{eq:beta_max}, \ref{eq:bbf_mean_approx}, \ref{eq:std_approx}, respectively. Then with probability at least $1 - \delta$, $\forall t \geq 1, \forall \xi \in \Xi$ 

    \begin{equation}
        |\mu_{m,t}(\xi) - \mu_t(\xi)| \leq \beta_t  (\sigma_{m,t}(\xi) + \sigma_t(\xi))
    \end{equation}
\end{lemma}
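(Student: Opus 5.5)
The plan is a triangle inequality through the true function $f$, combined with the per-expert and ensemble confidence bounds on one high-probability event. For fixed $m$, $t$ and $\xi$, write
\begin{equation*}
|\mu_{m,t}(\xi) - \mu_t(\xi)| \leq |\mu_{m,t}(\xi) - f(\xi)| + |f(\xi) - \mu_t(\xi)|.
\end{equation*}
Lemma \ref{lem:single_confidence} bounds the first term by $\beta_{m,t}\sigma_{m,t}(\xi)$. Lemma \ref{lem:confidence} bounds the second term by $\beta_t\sigma_t(\xi)$.

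To fix the probability, I would run Lemma \ref{lem:single_confidence} for each expert at failure level $\delta/M$, as in the proof of Lemma \ref{lem:confidence}. A union bound over the $M$ experts then gives an event $\mathcal{E}$ of probability at least $1-\delta$. On $\mathcal{E}$ the following hold simultaneously for all $m$, all $t\geq 1$ and all $\xi\in\Xi$:
\begin{equation*}
|\mu_{m,t}(\xi)-f(\xi)|\leq \beta'_{m,t}\sigma_{m,t}(\xi), \qquad \beta'_{m,t} = B_m + R\sqrt{2\left(\gamma_{m,t}+1+\log\frac{M}{\delta}\right)}.
\end{equation*}
The averaged bound of Lemma \ref{lem:confidence} is derived from exactly these inequalities, so it also holds on $\mathcal{E}$. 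No further union bound, and no extra loss in $\delta$, is needed.

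The remaining step is to compare coefficients. By Eq.~\eqref{eq:beta_max}, $\beta_t=\max_m \beta'_{m,t}$. This needs $\beta_t$ to be read with the maximum taken over the whole expression, so that $\gamma_{m,t}$ is replaced by $\max_m\gamma_{m,t}$; under that reading $\beta'_{m,t}\leq\beta_t$ for every $m$. Substituting gives $|\mu_{m,t}(\xi)-\mu_t(\xi)|\leq \beta_t\sigma_{m,t}(\xi)+\beta_t\sigma_t(\xi)$, which is the claim.

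The only delicate point is bookkeeping. The event has to be shared across experts, and the $\log(M/\delta)$ inflation has to appear in the per-expert constants, not only in the ensemble one. Otherwise the per-expert coefficient would be $\log(1/\delta)$-based and would not obviously be dominated by $\beta_t$, even though it is in fact smaller. Beyond this the argument is a one-line triangle inequality.
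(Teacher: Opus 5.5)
Your proposal is correct and is essentially the paper's proof: the triangle inequality through $f$, Lemma \ref{lem:single_confidence} and Lemma \ref{lem:confidence} for the two terms, then $\beta_{m,t}\le\beta_t$ with $\beta_t$ read as $\max_m\beta_{m,t}$. Your explicit shared event from the $\delta/M$ union bound tightens the paper's argument, which invokes the two lemmas without noting that they hold on a common event. What this secures is the $1-\delta$ failure probability, not the coefficient, since a $\log(1/\delta)$-based constant is already dominated by $\beta_t$.
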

\vspace{-8mm}
\begin{proof}
    \begin{align}
        |\mu_{m,t}(\xi) - \mu_t(\xi)| &= |\mu_{m,t}(\xi) - f(\xi) + f(\xi) - \mu_t(\xi) | \leq \\
        & \markthis{(i)}{lb4_1}{\leq} |\mu_{m,t}(\xi) - f(\xi)| + |\mu_t(\xi) - f(\xi)| \leq \\
        & \markthis{(ii)}{lb4_2}{\leq} \beta_{m,t} \sigma_{m,t}(\xi) + \beta_t \sigma_t(\xi) \leq \\
        & \markthis{(iii)}{lb4_3}{\leq} \beta_t \sigma_{m, t}(\xi) + \beta_t \sigma_t(\xi) = \\
        & = \beta_t (\sigma_{m, t}(\xi) + \sigma_t(\xi)),
    \end{align}

where \ref{lb4_1} follows from sub-additivity of absolute value, \ref{lb4_2} follows by applying Lemma \ref{lem:single_confidence} and Lemma \ref{lem:confidence}, and \ref{lb4_3} follows from the definition of $\beta_t$, Eq. \ref{eq:beta_max}.

\end{proof}

\begin{lemma}\label{lem:radius_bound}
    Let $\delta \in (0, 1)$ be a probability of failure. Then with probability at least $1 - \delta$, $\forall t \geq 1, \forall \xi \in \Xi$ the posterior ensemble radius, $\hat{\varepsilon}_t(\xi)$, is bounded as follows
    \begin{equation}
        \hat{\varepsilon}_t(\xi) \leq 3\beta_t \max_{m} \sigma_{m,t}(\xi).
    \end{equation}
\end{lemma}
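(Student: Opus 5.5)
The plan is to bound the two terms under the square root in the definition of $\hat{\varepsilon}_t(\xi)$, Eq. \eqref{eq:radius}, separately for a fixed expert $m$, and then take the maximum over $m$. We work on the high-probability event on which Lemma \ref{lem:dogleash} holds (equivalently, Lemma \ref{lem:single_confidence} for every expert at level $\delta/M$ together with Lemma \ref{lem:confidence}). This event has probability at least $1-\delta$. Throughout we write $\sigma_{\max}(\xi) = \max_{m'} \sigma_{m',t}(\xi)$, and we note that $\sigma_t(\xi) = M^{-1}\sum_{m'}\sigma_{m',t}(\xi) \leq \sigma_{\max}(\xi)$, since it is an average.

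\emph{Structural disagreement.} Because $\bar{\mu}_t = \mu_t$, Lemma \ref{lem:dogleash} gives directly
\begin{equation*}
|\mu_{m,t}(\xi) - \bar{\mu}_t(\xi)| \leq \beta_t\left(\sigma_{m,t}(\xi) + \sigma_t(\xi)\right) \leq 2\beta_t \sigma_{\max}(\xi).
\end{equation*}

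\emph{Confidence dispersion.} Since $\bar{\sigma}_t = \sigma_t$ and all the $\sigma_{m',t}$ are nonnegative, both $\sigma_{m,t}(\xi)$ and $\sigma_t(\xi)$ lie in $[0,\sigma_{\max}(\xi)]$. Hence $|\sigma_{m,t}(\xi) - \bar{\sigma}_t(\xi)| \leq \sigma_{\max}(\xi)$. We also have $\beta_t \geq 1$, as already used in the main text and as follows from Eq. \eqref{eq:beta_max}, where the term $R\sqrt{2(\cdot)}$ exceeds $1$ for sensible $R$; so this term is at most $\beta_t\sigma_{\max}(\xi)$.

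\emph{Combining.} For each $m$ we get
\begin{equation*}
W_2(\hat{f}_{m,t}(\xi),\bar{f}_t(\xi)) = \sqrt{a^2+b^2} \leq |a|+|b| \leq 2\beta_t\sigma_{\max}(\xi) + \beta_t\sigma_{\max}(\xi) = 3\beta_t \max_{m'}\sigma_{m',t}(\xi),
\end{equation*}
where $a$ and $b$ denote the structural and dispersion differences. This uses the elementary inequality $\sqrt{a^2+b^2}\leq |a|+|b|$. The right-hand side does not depend on $m$, so taking the maximum over $m$ yields the claim for all $\xi$ and all $t$ simultaneously on the same event.

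The main obstacle is the dispersion term, because it relies on $\beta_t \geq 1$. If that lower bound is not granted, we would instead keep the sharper bound $\sqrt{(2\beta_t)^2+1}\,\sigma_{\max}$. This is still at most $3\beta_t\sigma_{\max}$ whenever $\beta_t \geq 1/\sqrt{5}$, a condition guaranteed by $B_m>0$ together with the $R\sqrt{2}$ term. The second point needing care is the probability bookkeeping. Lemma \ref{lem:dogleash} must be invoked on the intersection event from the proof of Lemma \ref{lem:confidence}, so that no extra union bound is incurred.
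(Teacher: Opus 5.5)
Your proof is correct and follows the paper's argument: Lemma \ref{lem:dogleash} for the structural-disagreement term, $\beta_t \geq 1$ to absorb the confidence-dispersion term, and average $\leq$ max. The only difference is that you use $\sqrt{a^2+b^2}\leq |a|+|b|$ and land on $3$ exactly, where the paper uses $(a+b)^2 \leq 2(a^2+b^2)$ and gets $\sqrt{6} \leq 3$. One caveat on your closing aside: $\beta_t \geq 1/\sqrt{5}$ is not automatically guaranteed by $B_m>0$ and the $R\sqrt{2}$ term when $B_m$ and $R$ are small, so, like the paper, you genuinely need $\beta_t \geq 1$ (or a comparable lower bound) as an assumption.
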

\vspace{-8mm}
\begin{proof}
\begin{align}
    \hat{\varepsilon}_t(\xi) &= \max_{m} \sqrt{(\mu_{m,t}(\xi) - \mu_t(\xi))^2 + (\sigma_{m,t}(\xi) - \sigma_t(\xi))^2} \leq \\
    &\markthis{(i)}{l4_1}{\leq} \max_{m} \sqrt{\left(\beta_t (\sigma_{m,t}(\xi) + \sigma_t(\xi)) \right)^2 + (\sigma_{m,t}^2(\xi) + \sigma_t^2(\xi))} \leq \\
    &\markthis{(ii)}{l4_2}{\leq} \max_{m} \sqrt{ 2\beta_t^2 \left(\sigma_{m,t}^2(\xi) + \sigma_t^2(\xi) \right) + \beta_t^2(\sigma_{m,t}^2(\xi) + \sigma_t^2(\xi))} \leq \\
    &\markthis{(iii)}{l4_3}{\leq} \max_{m} \sqrt{ 6\beta_t^2 \sigma_{m,t}^2(\xi)} \leq \\
    &\leq 3 \beta_t \max_{m} \sigma_{m,t}(\xi),
\end{align}

where \ref{l4_1} follows from applying Lemma \ref{lem:dogleash} and expanding the full square, \ref{l4_2} follows from $(a + b)^2 \leq 2(a^2 + b^2)$ $\forall a, b$ and $\beta_t \geq 1$, and \ref{l4_3} follows from the fact that the maximum is never less than the average.

\end{proof}

\begin{theorem}[Theorem \ref{thm:cum_regret} in main text]
    Let $\delta \in (0, 1)$ be a probability of failure, $\mu_t +\beta_t \sigma_t$ be $L(x)$-Lipschitz, and $\ell = \max_{x \in \mathcal{X}} L(x) / \min_{x \in \mathcal{X}} L(x)$. Then with probability at least $1 - \delta$, $\forall t \geq 1$ the cumulative expected regret of Algorithm \ref{alg:edrbo} with $M$ surrogates after $T$ steps is bounded by:

    \begin{equation}
        R_T \leq 22 \ell \beta_T \sqrt{T\left(M \gamma_T + 2\log \frac{6}{\delta}\right)},
    \end{equation}

    which results in the $\mathcal{O}(\gamma_T \sqrt{T})$ general order of cumulative regret, where $\gamma_T$ is the worst-case maximum information gain.
\end{theorem}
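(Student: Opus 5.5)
The plan is to sum the instantaneous bound of Lemma~\ref{thm:inst_exp_regret} over $t=1,\dots,T$ and reduce everything to sums of posterior standard deviations of the individual experts. The first step is to apply Lemma~\ref{lem:radius_bound} inside Lemma~\ref{thm:inst_exp_regret}. This gives, on the good event,
\begin{equation*}
r_t \leq \mathbb{E}_{c\sim Q_t}\left[2\beta_t\sigma_t(x_t,c) + 9\ell\beta_t \max_m \sigma_{m,t}(x_t,c)\right] \leq 11\ell\beta_T\, \mathbb{E}_{c\sim Q_t}\left[\max_m \sigma_{m,t}(x_t,c)\right].
\end{equation*}
Here we use $\ell\geq 1$, $\sigma_t\leq\max_m\sigma_{m,t}$, and monotonicity of $\beta_t$ in $t$ (since $\gamma_{m,t}$ is nondecreasing). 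Next we bound the maximum by a sum, $\max_m\sigma_{m,t}\leq \sum_m\sigma_{m,t}$. Alternatively, we can keep the square and use $\max_m\sigma^2_{m,t}\leq\sum_m\sigma^2_{m,t}$, which is what should produce the factor $M\gamma_T$ under a single square root.

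The second step handles the fact that the expectation is over $c\sim Q_t$, while the data contain the realised context $c_t\sim P$. I would write
\begin{equation*}
\sum_t \mathbb{E}_{c\sim Q_t}[h_t(x_t,c)] = \sum_t h_t(x_t,c_t) + \sum_t\left(\mathbb{E}_{c\sim Q_t}[h_t(x_t,c)] - \mathbb{E}_{c\sim P}[h_t(x_t,c)]\right) + \sum_t\left(\mathbb{E}_{c\sim P}[h_t(x_t,c)] - h_t(x_t,c_t)\right),
\end{equation*}
with $h_t=\max_m\sigma_{m,t}$. The last sum is a martingale difference sequence with bounded increments (since $k_m\leq1$), so Azuma--Hoeffding controls it by $\mathcal{O}(\sqrt{T\log(1/\delta)})$ with probability $1-\delta/3$. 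The middle sum I would either absorb by Lemma~\ref{lemma:lip_bound} (again yielding a term comparable to $\varepsilon_t L$, which Theorem~\ref{thm:posterior_holder} links back to $\hat\varepsilon_t$), or avoid altogether by taking $Q_t$ to be the empirical context distribution and using the same concentration argument. Splitting the failure probability into thirds across the confidence event, the martingale event, and this step is what would produce the $\log(6/\delta)$, after doubling inside the root.

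The third step bounds the realised sum by Cauchy--Schwarz,
\begin{equation*}
\sum_t\max_m\sigma_{m,t}(x_t,c_t)\leq\sqrt{T\sum_t\sum_m\sigma^2_{m,t}(x_t,c_t)}.
\end{equation*}
For each $m$, the standard elliptical-potential / information-gain lemma \citep{Srinivas_2012, DBLP:journals/corr/ChowdhuryG17} gives $\sum_t\sigma^2_{m,t}(\xi_t)\leq C\gamma_{m,T}$. Summing over $m$ yields at most $CM\gamma_T$, where $\gamma_T=\max_m\gamma_{m,T}$. Combining with the martingale term under one square root gives $R_T\leq 22\ell\beta_T\sqrt{T(M\gamma_T+2\log(6/\delta))}$. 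Finally, since $\beta_T=\mathcal{O}(\sqrt{\gamma_T})$, the bound is of order $\gamma_T\sqrt{T}$.

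I expect the main obstacle to be the second step: the mismatch between the expectation under $Q_t$ and the observed contexts $c_t\sim P$. The constants and the exact $\log(6/\delta)$ depend on how this is split. I would first try the empirical-$Q_t$ plus Azuma route, and tune the constant to reach $22=2\cdot 11$.
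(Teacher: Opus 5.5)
Your first and third steps match the paper's: Lemma~\ref{thm:inst_exp_regret} with Lemma~\ref{lem:radius_bound} and $\ell\ge1$ gives the prefactor $11\ell\beta_T$, and the realised sum is handled by Cauchy--Schwarz, $\max_m\sigma_{m,t}^2\le\sum_m\sigma_{m,t}^2$ and the per-expert information-gain bound. Your Azuma term is also legitimate, because $x_t$ and $h_t$ are determined by the past and $c_t\sim P$ is fresh.

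The gap is the middle sum $\sum_t(\mathbb{E}_{c\sim Q_t}-\mathbb{E}_{c\sim P})[h_t(x_t,c)]$, and neither of your fixes closes it.
\begin{itemize}
\item \emph{The Lipschitz route.} $h_t=\max_m\sigma_{m,t}$ is $\lambda^{-1/2}\max_mL_m(x)=L(x)/(2\beta_t)$-Lipschitz in $c$, so by Lemma~\ref{lemma:lip_bound} the $t$-th term is at most $\varepsilon_tL(x_t)/(2\beta_t)$. Sending this back through the lower bound of Theorem~\ref{thm:posterior_holder} and Lemma~\ref{lem:radius_bound} bounds it by $\frac{3}{\sqrt2}\mathbb{E}_{c\sim Q_t}[h_t(x_t,c)]$. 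You then bound $\mathbb{E}_{Q_t}[h_t]$ by $\mathbb{E}_P[h_t]$ plus more than twice $\mathbb{E}_{Q_t}[h_t]$ itself, which is circular and vacuous. If you instead keep $\sum_t\varepsilon_tL(x_t)/\beta_t$ as an additive term, you prove a different inequality. Its extra term depends on a decay rate of $\varepsilon_t$ that the paper never specifies.
\item \emph{The empirical-$Q_t$ route.} Taking $Q_t$ empirical does not produce a martingale. $\mathbb{E}_{c\sim Q_t}[h_t(x_t,c)]$ averages $h_t(x_t,c_s)$ over past contexts $c_s$, and both $h_t$ and $x_t$ depend on those contexts, so Azuma does not apply. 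A uniform bound such as $\mathrm{Lip}(h_t)\,W_1(Q_t,P)$ again adds a term that is absent from the statement. That term is not even $\mathcal{O}(\sqrt T)$ in general once $d_c\ge3$.
\end{itemize}

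The paper never writes this term down, and it orders the steps differently. It applies Cauchy--Schwarz and Jensen first, obtaining $\sum_t\mathbb{E}_{c\sim Q_t}[\max_m\sigma^2_{m,t}(x_t,c)]$. It then applies Lemma~7 of \citet{kirschner2020distributionally}, a concentration bound for nonnegative bounded adapted sequences, directly to this sum to get $2\sum_t\max_m\sigma^2_{m,t}(x_t,c_t)+8\log\frac6\delta$. Pulling the factor $4$ out of the root is where $22=2\cdot11$ and $2\log\frac6\delta$ come from.

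That lemma, however, controls conditional expectations under the law that generates $c_t$, which is $P$, not $Q_t$. So at exactly the point you flagged, the paper silently identifies $Q_t$ with the sampling distribution. To get the bound as stated, you must make that identification explicitly, for example by assuming $c_t\sim Q_t$ given the past, i.e.\ effectively $Q_t=P$. Without it, your decomposition is the honest one and yields the stated bound plus a $Q_t$-versus-$P$ term.

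Given the identification, your additive Azuma-then-Cauchy--Schwarz ordering works as well as the paper's multiplicative one. Combined with $a+b\le\sqrt2\sqrt{a^2+b^2}$, it gives essentially the same constant $22$.
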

\vspace{-8mm}
\begin{proof}
\begin{align}
    R_T &= \sum_{t=1}^T r_t \leq \\
    &\markthis{(i)}{t4_1}{\leq} \sum_{t=1}^T \left( 2\beta_t \mathbb{E}_{c \sim Q_t}[\sigma_t(x_t, c)] + 3\ell \mathbb{E}_{c \sim Q_t}[\hat{\varepsilon}_t(x_t, c)] \right) \leq \\
    &\markthis{(ii)}{t4_2}{\leq} \sum_{t=1}^T \left( 2\beta_t \mathbb{E}_{c \sim Q_t}[\sigma_t(x_t, c)] + 9 \ell  \beta_t \mathbb{E}_{c \sim Q_t} \left[ \max_{m} \sigma_{m,t}(x_t, c) \right] \right) = \\
    &\markthis{(iii)}{t4_3}{\leq} 11 \ell \beta_T \sqrt{T \sum_{t=1}^T \left(\mathbb{E}_{c \sim Q_t} \left[ \max_{m} \sigma_{m,t}(x_t, c) \right]\right)^2} \leq \\
    &\markthis{(iv)}{t4_4}{\leq} 11 \ell \beta_T \sqrt{T \sum_{t=1}^T \mathbb{E}_{c \sim Q_t} \left[ \max_{m} \sigma_{m,t}^2(x_t, c) \right] }  \leq \\
    &\markthis{(v)}{t4_5}{\leq} 11 \ell \beta_T \sqrt{T \left( 2 \sum_{t=1}^T \max_{m} \sigma_{m,t}^2(x_t, c_t) + 8 \log \frac{6}{\delta} \right) }  \leq \\
    &\markthis{(vi)}{t4_6}{\leq} 11 \ell \beta_T \sqrt{T \left( 2 \sum_{t=1}^T \sum_{m=1}^M 2 \log\left(1 + \sigma_{m,t}^2(x_t, c_t) \right) + 8 \log \frac{6}{\delta} \right) }  \leq \\
    &\markthis{(vii)}{t4_7}{\leq} 11 \ell \beta_T \sqrt{T \left( 4\sum_{m=1}^M \gamma_{m,T} + 8 \log \frac{6}{\delta} \right)} \leq \\
    &\markthis{(viii)}{t4_8}{\leq} 22 \ell \beta_T \sqrt{T \left( M \gamma_{T} + 2 \log \frac{6}{\delta} \right)}\label{eq:cmrb}
\end{align}

where \ref{t4_1} follows from Lemma \ref{thm:inst_exp_regret}, \ref{t4_2} follows from Lemma \ref{lem:radius_bound} applied to the second term, \ref{t4_3} follows from the Cauchy-Schwartz inequality and $\ell \geq 1$, \ref{t4_4} follows from the Jensen inequality, \ref{t4_5} follows from Lemma 7 from \citep{kirschner2020distributionally} and noting the assumption that $\forall m, \forall \xi, \xi' \in \Xi, k_m(\xi, \xi') \leq 1$, \ref{t4_6} follows from $\forall z: |z| \leq a, z \leq 2 a \log (1 + z)$ and the sum always being greater than the maximum, \ref{t4_7} follows from the definition of maximum information gain, \ref{t4_8} follows from taking $\gamma_T = \max_m \gamma_{m, T}$ and the maximum always being greater than the average. To establish the general order:

\begin{align}
    R_T &\leq 22 \ell \beta_T \sqrt{T \left( M \gamma_{T} + 2 \log \frac{6}{\delta} \right)} \leq \\
    &\markthis{(ix)}{t4_12}{\leq} 22 \ell \left( \max_m \left(B_m + \sigma_\epsilon \sqrt{2 \left( \gamma_{m,T} + 1 + \ln \frac{M}{\delta} \right)} \right)\right) \sqrt{T \left( M \gamma_{T} + 2 \log \frac{6}{\delta} \right)} \leq \\
    &\markthis{(x)}{t4_13}{\leq} 22 \ell \left( B + \sigma_\epsilon \sqrt{2 \left( \gamma_{T} + 1 + \ln \frac{M}{\delta} \right)} \right) \sqrt{T \left( M \gamma_{T} + 2 \log \frac{6}{\delta} \right)} = \\
    &= \mathcal{O}(\gamma_T \sqrt{T}),
\end{align}

where \ref{t4_12} follows from the definition of $\beta_T$ - Eq. \eqref{eq:beta_max}, and \ref{t4_13} follows from taking $B = \max_m B_m$ and $\gamma_T = \max_m \gamma_{m, T}$.

\end{proof}

\section{Experiment details}\label{app:exp}

Our code is available at \href{https://github.com/ramazyant/EDRBO}{\textbf{HERE}}. We adapt the experimental setup of \citep{micheli2025wassersteindistributionallyrobustbayesian} (available at \href{https://github.com/frmicheli/WDRBO}{https://github.com/frmicheli/WDRBO}), who in their turn, adapted the experimental setup of \citep{Huang_Song_Xue_Qian_2024} (available at \href{https://github.com/lamda-bbo/sbokde}{https://github.com/lamda-bbo/sbokde}).

All experimental results can be produced my running the 'main\_multiple\_runs.py' script using Python 3.11.13. All required packages are provided in the 'requirements.txt' file. A 'Results' directory will be created where all obtained results will be saved. Following the priginal implementations, all key random seeds are fixed for maximum reproducibility.

% All experiments were run using two CPU cores of Intel Xeon Gold 6152 2.1 GHz and one GPU core of NVIDIA Tesla V100 32 Gb NVLink. We provide the mean and the standard deviation (over all experiments) of average computing time per each method in seconds in Table \ref{table:compt_time}.

We adapt implementations UCB, SBO-KDE, DRBO-KDE, DRBO-MMD, DRBO-MMD Minmax, StableOpt, WDRBO, and ERBO. All algorithms, including our EDRBO, are based on the BOTorch Python package \citep{botorch}. Baseline methods are equipped with the original, default RBF kernel GP regression surrogate model. All acquisition functions are optimised using the BOTorch optimiser.

For the test problems, we reuse the Three-Hump Camel, Six-Hump Camel, Ackley, Hartmann, Hartmann (Complicated), Modified Branin, Newsvendor, Portfolio (Normal) and Portfolio (Uniform) functions.

% \begin{table}[h]
%   \caption{Mean and standard deviation (over all experiments) of average computing time in seconds.}\label{table:compt_time}
%   \label{sample-table}
%   \centering
%   \begin{tabular}{lll}
%     \toprule
%     \textbf{Method}     & \textbf{Average $\pm$ Std} \\
%     \midrule
%     EDRBO (Ours)        & $758.23 \pm 350.23$ \\
%     SBO-KDE             & $381.21 \pm 434.97$ \\
%     UCB                 & $216.15 \pm 287.53$ \\
%     WDRBO               & $416.98 \pm 452.97$ \\
%     ERBO                & $401.09 \pm 291.74$ \\
%     DRBO-KDE            & $3369.58 \pm 2653.04$ \\
%     StableOpt           & $411.38 \pm 494.65$ \\
%     DRBO-MMD            & $2026.80 \pm 951.64$ \\
%     DRBO-MMD Minmax     & $828.01 \pm 702.28$ \\
%     \bottomrule
%   \end{tabular}
% \end{table}

\subsection{Problem definition}

\begin{itemize}
    \item \textbf{Three-Hump Camel.} For $x \in [0, 1]^2$ and $c \sim \mathcal{N}(0.5, 0.2^2)$ with both dimensions clipped to $[0, 1]$
    \begin{equation}
        f(x, c) = 2 x^2 - 1.05x^4 + \frac{x^6}{6} + xc + c^2.
    \end{equation}
    \item \textbf{Six-Hump Camel.} For $x \in [0, 1]^2$ and $c \sim \mathcal{N}(0.6, 0.2^2)$ with both dimensions clipped to $[0, 1]$
    \begin{equation}
        f(x, c) = 4 - 2.1x_1^2 + \frac{x_1^4}{3})x_1^2+x_1x_2+(-4 + 4x_2^2)x_2^2.
    \end{equation}
    \item \textbf{Ackley.} For $x \in [0, 1]^2$ and $c \sim \mathcal{N}(0.5, 0.2^2)$ clipped to $[0, 1]$
    \begin{align}
        f(x, c) = &a \exp\left( -b \sqrt{\frac{1}{3} \left( \tilde{x}_1^2 + \tilde{x}_2^2 +  \tilde{c}^2 \right)} \right) - \\ &- \exp\left( \sqrt{\frac{1}{3} \left( \cosh{h \tilde{x}_1} + \cosh{h \tilde{x}_2} +  \cosh{h c} \right)} \right) + a + e,
    \end{align}
    where $a = 20, b = 0.2, h = 2\pi, \tilde{x_i} = 65.536 x_i - 32.768, \tilde{c} = 65.536 c - 32.768$.
    \item \textbf{Hartmann.} For $x \in [0, 1]^5$ and $c \sim \mathcal{N}(0.5, 0.2^2)$ clipped to $[0, 1]$, $z = [x_1, \ldots, x_5, c]^T$
    \begin{equation}
        f(x, c) = \sum_{i=1}^4 \alpha_i \exp \left( \sum_{j=1}^6 A_{ij} \left( z_j - P_{ij} \right)^2 \right),
    \end{equation}
    where $\alpha = [1.0, 2.0, 3.0, 3.2]^T$, $A = \begin{bmatrix}
        10 & 3 & 17 & 3.50 & 1.7 & 8 \\
        0.05 & 10 & 17 & 0.1 & 8 & 14 \\
        3 & 3.5 & 1.7 & 10 & 17 & 8 \\
        17 & 8 & 0.05 & 10 & 0.1 & 14
    \end{bmatrix}$, and \mbox{$P = 10^{-4} \begin{bmatrix}
        1312 & 1696 & 5569 & 124 & 8283 & 5886 \\
        2329 & 4135 & 8307 & 3736 & 1004 & 9991 \\
        2348 & 1451 & 3522 & 2883 & 3047 & 6650 \\
        4047 & 8828 & 8732 & 5743 & 1091 & 381
    \end{bmatrix}$}.
    \item \textbf{Complicated Hartmann.} The Hartmann function for $x \in [0, 1]^5$ and $c \in [0, 1]$, which is a mixture of six Gaussians $\mathcal{N}(0.1, 0.02^2), \mathcal{N}(0.3, 0.075^2), \mathcal{N}(0.4, 0.1^2), \mathcal{N}(0.5, 0.1^2), \mathcal{N}(0.7, 0.075^2), \mathcal{N}(0.8, 0.03^2)$, and two Cauchy distributions $Cauchy(0.2, 0.02), Cauchy(0.8, 0.02)$, clipped to $[0, 1]$.
    \item \textbf{Modified Branin.} For $x \in [0, 1]^2$ and $c \sim \mathcal{N}(0.5, 0.2^2)$ with both dimensions clipped to $[0, 1]$
    \begin{equation}
        f(x, c) = - \sqrt{h(15 x_1 - 5, 15 c_1) h(15 c_2 - 5, 15 x_2)},
    \end{equation}
    where
    \begin{equation}
        % h(u, v) = a(v - b)
        h(u, v) = a\left(v-bu^2+cu-r\right)^2+s(1 - t)*\cos{(u)}+s,
    \end{equation}
    and $a = 1$, $b = 5.1/(4\pi^2)$, $c=5/\pi$, $r=6$, $s=10$, $t=1/(8\pi)$.
    \item \textbf{Newsvendor.} For $x \in [0, 1]^2$
    \begin{equation}
        f(x, c) = 9\min{\{x, c\}} + \max{\{0, x - c\}} - 5x,
    \end{equation}
    where c $\sim$ Burr Type XII distribution with PDF
    \begin{equation}
    p(c; \alpha, \beta) = \alpha\beta\frac{c^{\alpha-1}}{(1+c^\alpha)^{\beta + 1}}
    \end{equation}
    with $\alpha=2$, $\beta=20$ and $c$ being clipped to $[0, 1]$ when it is out of the range.
    \item \textbf{Portfolio Optimisation.} For $x \in [0, 1]^3$ and $c \sim \text{U}(0, 1)$ (Uniform case) or $c \sim \mathcal{N}(0.5, 0.2^2)$ clipped to $[0, 1]$ (Normal case), the function is evaluated as the posterior mean of a Gaussian process fitted on 3,000 samples generated from CVXPortfolio \citep{boyd2017multiperiodtradingconvexoptimization}, which were re-scaled to $[0, 1]$ after the generation and provided by \citep{NEURIPS2020_e8f27796}.
\end{itemize}

\end{appendices}

%%===========================================================================================%%
%% If you are submitting to one of the Nature Portfolio journals, using the eJP submission   %%
%% system, please include the references within the manuscript file itself. You may do this  %%
%% by copying the reference list from your .bbl file, paste it into the main manuscript .tex %%
%% file, and delete the associated \verb+\bibliography+ commands.                            %%
%%===========================================================================================%%

% \newpage

\section{Acknowledgements}

The work was supported by the grant for research centers in the field of AI provided by the Ministry of Economic Development of the Russian Federation in accordance with the agreement 000000C313925P4E0002 and the agreement with HSE University № 139-15-2025-009. The computation for this research was performed using the computational resources of HPC facilities at HSE University \citep{Kostenetskiy_2021}.

\bibliography{sn-bibliography}% common bib file
%% if required, the content of .bbl file can be included here once bbl is generated
%%\input sn-article.bbl

\end{document}